\newtheorem{proposition}{Proposition}
\newtheorem{corollary}{Corollary}
\theoremstyle{definition}
\newtheorem{definition}{Definition}
\theoremstyle{remark}
\newtheorem{remark}{Remark}
\newcommand{\fix}[1]{#1}
\DeclareMathOperator*{\argmax}{arg\,max}
\newcommand{\norm}[1]{\left\lVert #1 \right\rVert}
\begin{document}

\title{Learning Ordinal Response Policies in Rank-Based Stochastic Prize-Collecting Games}

\author{Malintha~Fernando,~Petter~\"{O}gren,~and~Silun~Zhang%
\IEEEcompsocitemizethanks{
  \IEEEcompsocthanksitem M. Fernando, P. \"{O}gren, and S. Zhang are with KTH Royal Institute of Technology, Stockholm, Sweden.
  E-mail: \{malintha, petter, silunz\}@kth.se
}}


\maketitle

\begin{abstract}
The Team Orienteering Problem (TOP) generalizes many real-world multi-agent scheduling and routing tasks that occur in autonomous mobility, aerial logistics, and surveillance applications. While many flavors of the TOP exist for planning in multi-agent systems, they assume that all the agents cooperate toward a single objective; therefore, they do not extend to settings when they compete in reward-scarce environments. We propose Stochastic Prize-Collecting Orienteering Games (SPCOG) as an extension of the TOP to plan in the presence of self-interested agents operating on a graph, under energy constraints and stochastic transitions. A theoretical discussion on complete and star graphs establishes that there is a unique pure Nash equilibrium in SPCOGs that coincides with the optimal routing solution of an equivalent TOP under rank-based conflict resolution. We propose the concept of Ordinal Rank (OR) as a concise representation of an agents' global rank and its location within a topological, well-defined neighborhood. Empirical evaluations conducted on real-world, road-network graphs under both dynamic and stationary prize distributions show that in parameter-sharing settings, the policies that leverage local information can outperform those policies leverage global information when the former is conditioned on the OR rather than the global rank, indicating that the OR acts as a strong inductive bias in multi-agent games on graphs. The OR-conditioned policies also generalize much better to games with large number of agents compared to global-rank conditioned policies. Finally, we propose Fictitious Ordinal Response Learning (FORL) as an entropy-regulated algorithm to obtain convergent policies in independent-learning settings in prize-collecting games on graphs. 
\end{abstract}

\begin{IEEEkeywords}
Multi-Robot Systems, Deep Reinforcement Learning in Multi-Agent Domains, Agent-Based Simulations and Modeling.
\end{IEEEkeywords}

\section{Introduction}

The team orienteering problem (TOP) generalizes the classical traveling salesman problem (TSP) to compute a set of ``tours'' that visit a \textit{subset} of cities while maximizing a \textit{collective utility} under a given travel budget~\cite{chao1996team}.
It provides a unified framework for an array of real-world applications, such as logistic networks, and path planning~\cite{9636854,8206597}.
While several TOP variants exist, e.g., TOPs with task deadlines~\cite{gunawan_orienteering_2016}, they share the notion that all robots must maximize the collective return, thus failing to model settings with heterogeneous or competing objectives, e.g., competing individuals or teams in resource-scarce environments.
To address this gap, we propose a game-theoretic variant of the TOP, played by self-interested robots (agents), which we refer to as the \emph{Stochastic Prize-Collecting Orienteering Games (SPCOG)}.

In multi-robot systems, significant research is underway to design robotic teams to automate last-mile delivery and hotspot monitoring, e.g., in surveillance and ocean monitoring.
However, maintaining a large fleet of mobile robots is often impractical for service providers, who may instead delegate tasks to multiple externally-owned fleets with diverse capabilities, i.e.,
carrying capacity and sensors.
While the fleets can route themselves to service the depots \textit{independently}, they may not coordinate their plans with the other fleets due to inherent misalignments in mode of operation, capabilities, and economic incentives.
Therefore, conventional TOP-based routing solutions, which assume cooperative robots with a shared objective, are not directly applicable to systems of heterogeneous and self-interested fleets, motivating a TOP variant that accommodates individual objectives and strategic interactions.

The SPCOG also effectively capture the bi-level, profit-driven nature of crowd-sourced applications where rational operators, e.g., delivery drivers, compete to maximize their individual profits among other operators, while the service providers aim to maximize their own profits while competing with similar other providers, e.g., \textit{Uber}, and \textit{Lyft}.

Compared to the TOP, given a pre-announced \textit{conflict-resolution} rule, agents in an SPCOG aim to maximize their individual rewards rather than the collective utility.
A conflict-resolution rule is a rule that defines how to distribute the prize when multiple agents serve the same node.
Additionally, in the SPCOG formulation, the start and the terminal nodes need not be fixed; there may exist multiple terminal nodes in an SPCOG, allowing the agents to choose according to their preferences.
We show that when the graph is complete, there exists a unique pure Nash equilibrium (PNE) that can be obtained under local information and rank-based conflict resolution.
We propose the concept of ``Ordinal Rank" --an agent's effective rank in temporarily-formed local neighborhoods, for finding \textit{localized best responses} at each stage under rank-based conflict resolution rules in games on graphs.
Secondly, we propose Fictitious Ordinal Response Learning (FORL) --an entropy-regulated scheduling algorithm inspired by the classical fictitious-play method to obtain convergent policies in independent-multi-agent reinforcement learning (MARL) settings.
In experiments conducted on real-world graphs, we observe that when the robots' local observations are conditioned on their ordinal ranks, policies learned under parameter-sharing achieve 95\% total team reward compared to that of optimal routing solution in an equivalent TOP, show better generalizability to previously unseen prize distributions, and scale better with the number of agents.

\begin{figure}[t]
  \centering
  \includegraphics[width=1.03\columnwidth]{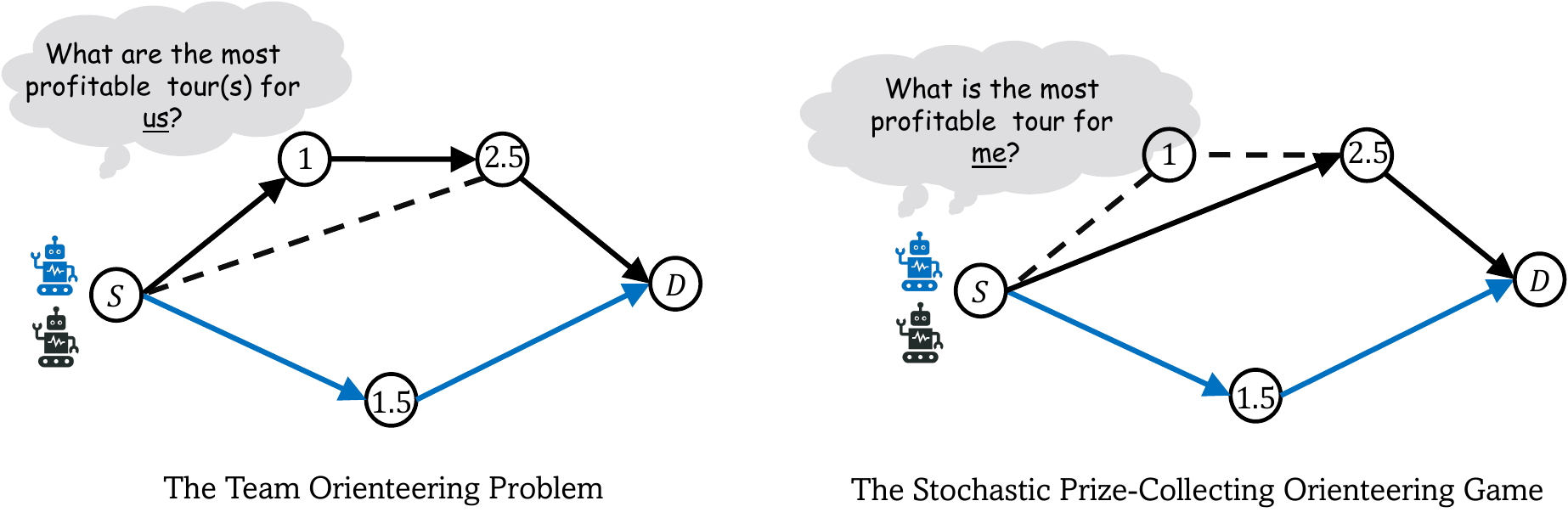}
  \caption{\small{A high-level comparison between the team orienteering problem (TOP) (left) and the proposed stochastic prize-collecting orienteering game (SPCOG) (right).}}
  \label{fig:cover}
\end{figure}

Fig.~\ref{fig:cover} compares best-response strategies in a prize-collecting game to the optimal strategy of an equivalent TOP.
Here, $S, D$ denote the start and terminal nodes. One robot gets priority during conflicts. In the TOP, the optimal policy (left) routes the robots to maximize the \emph{total reward} regardless of their ranks. However, when the robots are \emph{self-interested}, the senior robot must directly move to prize (2.5) ignoring the prize (1) to prevent being cut-in-front, while the other must move to (1.5) (right).
Therefore, the maximum total reward in the SPCOG (4) is less than that of the TOP's (5).

The key contributions of this work are as follows.
\begin{enumerate}
    \item We introduce SPCOGs for decision-making in competitive, stochastic routing environments under energy constraints, and discuss the existence of a unique PNE on different graphs.
    \item We propose the concept of Ordinal Rank (OR) that effectively summarizes the information about one's most influential neighbors in rank-based games on graphs to find best-responses locally.
    \item We propose Fictitious Ordinal Response Learning (FORL), an entropy-regulated scheduling algorithm to learn convergent, independent policies in routing games on graphs.
    \item Finally, we provide extensive experiments to evaluate the scalability, generalizability of the learned policies in real-world road networks.
\end{enumerate}

\subsection{Related Literature}

\subsubsection{The Team Orienteering Problem}
The TOP is a multi-agent generalization of the classic traveling salesman problem (TSP) which does not necessitate the salesmen to visit all the cities, rather a subset that maximizes the \textit{collective} reward given a maximum travel budget~\cite{gunawan_orienteering_2016}.
In~\cite{park_learn_nodate}, authors formulate the multiple TSP as a Markov decision process, where the salesmen contribute toward a common global objective.
The TOP has been extensively studied in robotics for scheduling in stochastic environments.
In~\cite{8206597}, authors consider a single agent orienteering problem with stochastic prizes, while in~\cite{9636854}, authors formulate the multi-robot ocean monitoring under the TOP.
Multi-player games on graphs have a relatively short literature compared to matrix-form games \cite{bouyer2019computation}, and the game-theoretic variants of the TOP have only recently gained attention in operational research.
In~\cite{alvarez-miranda_competing_2024}, \fix{authors} introduce a two-agent game where a leader seeks to interdict the follower's revenue in pure competitive settings.
A related game-theoretic formulation is considered in~\cite{varakantham_direct_nodate} where the agents experience a cost proportionate to the number of agents at a chosen node.
However, these formulations restrict the agents to observe the same cost during conflicts, and do not generalize to exogenous processes.
In~\cite{murray2020prize}, authors studied the multi-robot prize-collecting game on a graph. However, the agents must move one-after-the-other in their Stackelberg formulation as opposed to simultaneous moves in ours.

\subsubsection{Game-Theoretic Task Assignment}
Several recent works propose MARL-based approaches for multi-robot task assignment (MRTA) in dynamic, stochastic environments in cooperative~\cite{gui2024collaborative,yu2021optimizing,shen2023dynamic}, and competitive settings~\cite{Fernando2023GraphAM} for manufacturing, transportation and emergency-rescue applications.
Market auctions have also been studied as MRTA at the presence of self-interested robotic agents where individuals can propose task bundles they would like to execute by minimizing an individual cost~\cite{choi2009consensus}.
Oftentimes the auction-based methods require a central arbitrator who evaluates the robots' proposals.
This requires iterating through a number of task bundles that is exponential in the number of nodes in the graph --requiring extensive computation, while overlooking the stochastic environmental transitions.

Finally, we draw parallels to resource-selection games (RSG) in operations research~\cite{gkatzelis2016optimal}.
We highlight that the SPCOGs differ from RSGs in two crucial ways: firstly, the action space of an agent in SPCOGs is constrained by the graph structure, and secondly, SPCOGs are an extensive-form game as opposed to the RSG's normal-form nature.

\section{Preliminaries}

\subsection{Stochastic Games}
A stochastic game (SG) is a tuple $\langle \mathcal{N},\mathcal{S},\mathbb{T},\fix{R}, \fix{A}, \gamma \rangle$, where $\mathcal{N}= \{1,\dots,n\}$, a set of agents, $\mathcal{S}$ is the state space.
\fix{The joint action space} $A = \times_{i \in \fix{n}}A_i$ where $A_i$ is the action space of the $i$-th agent, $R = \{R_1, \dots, R_n\}$, the collection of the agents' reward functions such that $R_i: \mathcal{S} \times A \times \mathcal{S} \rightarrow \mathbb R, \forall i \in \mathcal{N}$.
The transition kernel $\mathbb{T}: \mathcal{S} \times A \rightarrow \Delta( S )$ defines the probability distribution over $S$ given the current joint state and the joint action.
Finally, $\gamma$ is the discounting factor in $i$'s value function $V_i^{\pi}(s^t)$ that is the \textit{total expected discounted cumulative reward} at any state $s^t \in S$ under a joint strategy $\pi = \pi_i \times \pi_{-i}$.
Here, we use $-i$ to denote the set of all other agents except $i$.
The objective of an agent $i$ in an SG is to find the strategy $\pi^*_i$ that maximizes its value function as
\begin{equation}
    \pi_i^* \gets \argmax_{\pi_i (a_i^t| s^t)} V_i^{\pi_i,\pi_{-i} } (s^t)
    \label{eq:obj}
\end{equation}
for all $t$ using its local observations.

\begin{definition}[Nash Equilibrium]
In a stochastic game, a joint strategy $\pi$ is called a Pure Nash Equilibrium (PNE) if no player has an incentive to unilaterally deviate from it, i.e.,
    \begin{equation}
        V^{\pi_i, \pi_{-i}}_i(s^t) \geq V^{\pi'_i, \pi_{-i}}_i(s^t)
    \end{equation}
    for all $i \in \mathcal{N}$, $s^t \in S$ and $\pi'_i \neq \pi_i$~\cite{hu2003nash}.
\end{definition}
A \textit{stage game} is a \textit{one-shot} game played by the agents during a single timestep of the stochastic game.

\section{Stochastic Prize-Collecting Orienteering Game}

Let  $\mathcal{G}$ = $(\mathcal{V}, \mathcal{E}, \mathcal W)$ define a weighted-undirected graph, where $\mathcal{V}$ = $\{1,\dots,V\}$, $\mathcal{E} \subseteq \{(u,v): u,v \in \mathcal{V}\}$ are the sets of nodes and edges.
Here, $\mathcal{W}$ is a symmetric matrix that defines the travel cost over an edge, \fix{without loss of generality, we assume} $\mathcal{W}(u,v) \fix{\in [0,w_{\max}]}$, for any $(u, v) \in \mathcal{E}$.
Let $p_u^t(\beta_u) \in \mathbb{R}_{\ge 0}$ denote the prize state at time $t$ for each node $u \in \mathcal{V}$, where $p_u^t(\beta_u)$ is a random variable drawn from distribution $w_u(\beta_u)$ with $\beta_u$ being some sufficient statistics for the value distribution. We define the prizes vector $\mathbf{p}^t(\beta)= \big(p^t_u(\beta_u) \big)_{u\in \mathcal V}$.

An agent $i$'s state space is $\mathcal{S}_i = \mathcal{V} \times \mathbb{R}_{\geq 0}^2$, containing the tuples $\langle x_i^t, l_i^t, i\rangle$, whose elements are $x_i^t \in \mathcal{V}$ that is $i$'s location at $t$, $l_i^t \in [0, L_{\max}]$ which is $i$'s remaining travel budget under a common maximum budget $L_{\max}$ and its index $i$.
Each agent has a starting node $s_i$ and a destination node $d_i$ in $\mathcal{V}$.
We consider $d_i = d, \forall i$ in this work.
The terminal reward $p_d$ at $d$ is fixed such that $p_d \gg p_u, \forall u \neq d$.

Next, we extend the definition of an SG by including $\mathcal{G}$ and a conflict-resolution rule $\Xi$ to define an SPCOG, \fix{denoted by} $\Gamma$.
The conflict-resolution rule $\Xi$ defines a prize-sharing criteria if multiple agents visit the same node, simultaneously.

\fix{An SPCOG is represented as}, $\Gamma$ = $\langle \mathcal{G}, \mathcal{N}, \mathcal{S}, \mathbb{T}, R, A, \gamma, \Xi \rangle$,
where $\mathcal{S} = \times_{i \in N} \mathcal{S}_i \times \mathbb{R}^{|\mathcal{V}|}$ which include the states of the agents and the prizes.
For brevity, we ignore the time index and define the action space of an agent as $A_i = \mathcal{V}$. \fix{Note that}, the actions available for $i$ at $t$ are the set of $x_i^t$'s \textit{one-hop} neighbors.
We refer to this set as $i$'s \textit{reachable set} at $t$ such that $S_i^t = \{v \in \mathcal{V}: (x_i^t, v) \in \mathcal{E}\}$.

We focus on a rank-based conflict-resolution rule that allocates the prize at a node wholly to the agent with the smallest index (highest global rank).
Let the index numbering reflect the \textit{global rank} of the agents.
Then the \fix{highest} prize value an agent $i$ receives by visiting a node $v$ at $t$ is $p_v^t$.
The agent may collect it fully given that either it has not already been collected by another agent at a previous timestep, or no agent \fix{with rank} $j$ ($< i$) visits $v$ at $t$.
Formally, for some $i$ whose $x_i^t = v$,
\begin{equation}
    R^{t+1}_i = \begin{cases}
        0 &\text{ if } \exists \text{ } j  \in \mathcal{N}, j < i, \text{ such that } x^{t}_j = v, \\
        p^t_v &\text{ otherwise}.
    \end{cases}
    \label{eq:rew}
\end{equation}
Once collected, the prize gets set to 0, and re-drawn as below.
\begin{equation*}
    p^{t+1}_v = \begin{cases}
        \sim \omega_u(\beta_u) & \text{ if }  p^t = 0 \land \exists i \in \mathcal{N}\text{ such that } x^t_i = v, \\
        p^{t}_v &\text{ otherwise},
    \end{cases}
\end{equation*}

Finally, we define $\mathbb{I}^t_i$ to indicate the availability of an agent $i$ at $t$, such that $\mathbb{I}^{t+1}_i=1$ if $l_i^t > 0 \land x_i^t \neq d$, or zero otherwise.
Then we can write the value function of an agent at $t=0$ as
\begin{equation}
    V_i^{\pi} (s^0) = \mathbb{E}_{\substack{a_i \sim \pi_i \\ a_{-i}\sim \pi_{-i}}} \Big[ \sum_{t=0}^{T} \mathbb{I}_i^t \gamma^t R_i^t(a_i^t, a_{-i}^t, s^t) \mid s^{0} \Big]
    \label{eq:ag_ob1}
\end{equation}
where, $s^0 \in \mathcal{S}$ is the initial state.

\section{Existence of Equilibria on Different Graphs}
The state space of an SPCOG is exponential in the number of agents as one must find a best response to the others' strategies, while a single agent's state space is factorial in the number of nodes.
In~\cite{murray2020prize}, authors proposed a game where the agents reserve their full routes between $s$ and $d$ in a leader-follower manner.
The \textit{Stackelberg} response of an agent in \fix{leader-follower formulation} is to choose the \textit{optimal} route during its turn by excluding the prizes on previously reserved routes by the senior agents.
In contrast, SPCOG is a simultaneous-move game in which no agent can reserve a tour; meaning, any agent can intercept another agent's pre-determined tour during the current or a future stage of the game.
This induces uncertainty about the agents' actions causing the game significantly more challenging to solve compared to the Stackelberg variant.

According to Nash's existence theorem, there is at least one Nash equilibrium in the mixed-strategies in an SPCOG, since the number of agents and the action space is finite.
However, a more interesting question is whether an SPCOG played on an arbitrary graph possesses a pure Nash equilibrium (PNE)?
We will show that the answer to this question depends on the structure of the graph and the prize distribution.
Here we only focus on a single realization of prizes; $\mathbf{p}^0(\omega_u) \sim \times_{u \in \mathcal{N}} \omega_u$, without re-population.

\begin{proposition}\label{thm:complete}
    There is a unique pure strategy Nash equilibrium for all $\mathbf{p}^0(\omega_u), \forall \omega_u$ when $\mathcal{G}$ is complete, and a fixed rule is available for breaking ties between multiple maximal rewarding strategies for an agent.
\end{proposition}
\begin{proof}
    Consider some agent $i \in \mathcal{N}$, $|\mathcal{V}|>n$ and the descending order of prizes $\mathbf{p}_{desc} = (p_1, \dots, p_{u}, p_{u+1}, \dots \fix{p_{|\mathcal{V}|}})$, where $p_u \geq p_{u+1}, \forall u \in \mathcal{N}$.
    The best action for $i$ during the first stage game is to choose the $i$-th largest prize from $\mathbf{p}_{desc}$.
    Because, any deviation made by $i$ to left in $\mathbf{p}_{desc}$, (e.g., $i-1$-th prize), will result in zero rewards since there is always a senior agent, ($i-1$), who can override $i$.
    Any deviations to right will strictly reduce the reward.
    Therefore no agent can do better than choosing the prize that matches their rank.
    In case multiple maximal prizes are available, one may invoke a tie-breaking rule that is known to everyone, e.g., \textit{agent with the lower rank choosing the prize with the minimum travel cost}.
    Then, re-sort $\mathbf{p}_{desc}$, and repeat the same procedure in the subsequent stages.
    If the $i$-th prize is 0, or $l_i^t$ is not sufficient to reach another prize, then move to $d$.
    This concludes the pure Nash equilibrium strategy for any $i$ for any $\mathbf{p}^0$.
    The uniqueness of the solution is guaranteed by the uniqueness of $\mathbf{p}_{desc}$ if no prize repeats, and by the tie-breaking rule, otherwise.
\end{proof}
\begin{remark}\label{rem:comp}
It is clear from Proposition~\ref{thm:complete} that the PNE \fix{under a fixed tie-breaking rule} also results in the PNEs in the stage games. Also, the total team reward under the PNE is equal to that of the optimal solution obtained by solving an equivalent TOP for a given $L$.
\end{remark}
Next, consider $\mathcal{G}_s$ is a star graph, i.e., edge set $\mathcal E_s=\{ (w,u) : u\in \mathcal V, u\neq w  \}$,
where $w$ is some \textit{staging node} that does not contain a prize.
Then Proposition~\ref{thm:complete} also applies for $G_s$: the PNE strategy for the $i$-th agent is moving to the node with the $i$-th largest prize in every alternating stage.
\begin{remark}\label{rem:partial}
It is sufficient for an agent to only know its rank, the number of players, and the graph state to compute the PNE on a complete graph.
\end{remark}

\begin{proposition}\label{thm:no_fip}
    There exists some joint distribution of prizes $\times_{u \in \mathcal{V}}\omega_u(\bar{p}_u)$ and a graph $\mathcal{G}$, for which an SPCOG played on an arbitrary $\mathcal{G}$ does not possess a PNE.
\end{proposition}
\begin{proof}
    This is a proof by \fix{counter-example}.
    Consider the SPCOG shown in figure~\ref{fig:fip}, played on a graph $\mathcal{G}$.
    Consider two players $i, i+1$, referred to as \textit{senior} and \textit{junior}, are at $s$, and share the same destination $d_1 = d_2 = d$.
    Consider the pure strategies available to each player: $\pi^1 = (1,2)$, $\pi^2 = (2)$, $\pi^3 = (3)$ (we use the superscripts to avoid confusing with agents' strategies $\pi_i$).
    \fix{Here, a strategy $(u,v), u,v \in \mathcal{V}$ refers to the path from $s$ to $d$ through the nodes in the strategy, i.e., the path corresponding to strategy $(1,2)$ is $s,1,2,d$.}
    Given that $L_{\max} = 3$, no strategy that contains a moving-back action will terminate at $d$, and is not profitable. Therefore, we exclude all such strategies from the payoff matrix (~\ref{fig:fip} (Right)).
    The rows indicate the strategies of the senior player.
    Since all the strategies contain the terminal reward $p_d$, we have also omitted it.
    Now consider the strategy pairs that contain the best responses of the junior player for any strategy of the senior player: $(\pi^1, \pi^2), (\pi^2, \pi^3), (\pi^3, \pi^1)$. Since no strategy of the senior player in them is a best response to the junior player's strategy, this game does not contain a Nash equilibrium in the pure strategies for any $\alpha \in (0,1)$.
\end{proof}
In~\cite{murray2020prize}, authors show that the leader-follower route-selection game has a Stackelberg equilibrium for any $\mathcal{G}$ as long as a fixed tie-breaking rule is imposed to handle multiple maximal strategies.
However, Proposition~\ref{thm:no_fip} shows that \fix{such a} result does not necessarily carry \fix{in SPCOGs} when agents move simultaneously regardless the graph is undirected or directed.

\begin{figure}[t]
  \centering
    \centering
    \includegraphics[trim={0cm 1cm 0cm 1cm},clip,
                     width=0.8\linewidth]{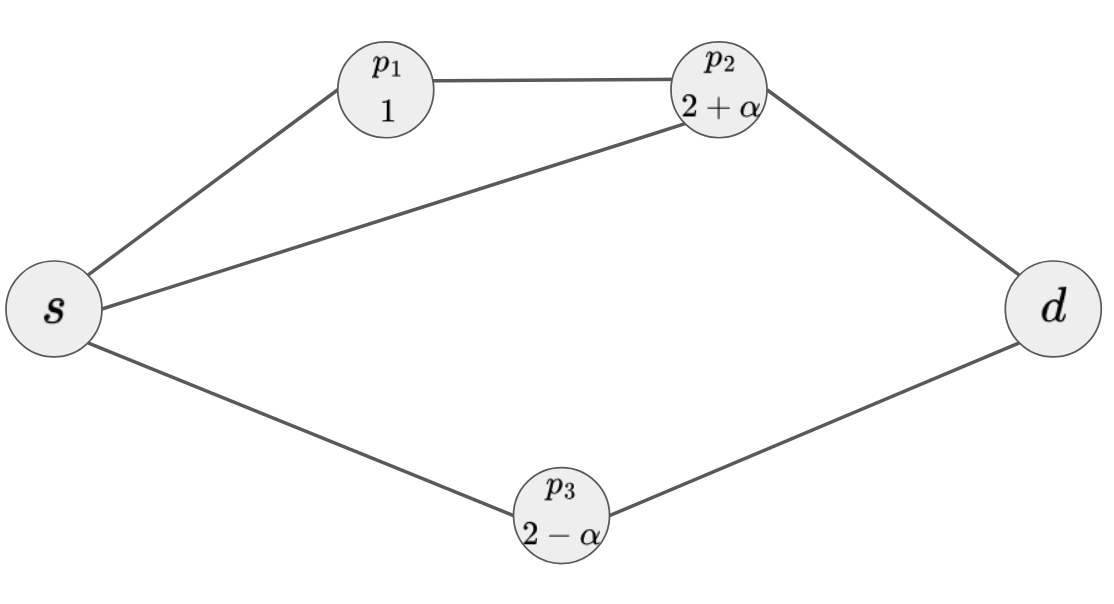}
  
  \caption{\small A two-agent SPCOG on $\mathcal{V}=\{s,1,2,3,d\}$,
    $\mathcal{W}(u,v)=1$, $L_{\max}=3$, $p_s=0$, $p_d\gg p_u\,\forall u\neq d$.}
  \label{fig:fip}
\end{figure}    

\begin{table}[t!]
    \setlength{\tabcolsep}{4pt}
    \small
    \centering
    \begin{tabular}{r|ccc}
      \toprule
               & $\pi^1$                     & $\pi^2$                     & $\pi^3$                     \\
      \midrule
      $\pi^1$  & $(3{+}\alpha,\,0)$          & $(1,\,2{+}\alpha)$          & $(3{+}\alpha,\,2{-}\alpha)$ \\
      $\pi^2$  & $(2{+}\alpha,\,1)$          & $(2{+}\alpha,\,0)$          & $(2{+}\alpha,\,2{-}\alpha)$ \\
      $\pi^3$  & $(2{-}\alpha,\,3{+}\alpha)$ & $(2{-}\alpha,\,2{+}\alpha)$ & $(2{-}\alpha,\,0)$          \\
      \bottomrule
    \end{tabular}
    \caption{Payoff matrix; rows are senior, columns are junior strategies. No PNE exists for any $\alpha\in(0,1)$.}
    \label{tab:fip}
\end{table}

\section{Approach}
Motivated by Remark~\ref{rem:partial} and \ref{thm:no_fip}, we propose a method to drive learning strategies toward an mixed-strategy NE in incomplete graphs, especially, when the full-state is not observable.
First, we introduce the concept of \textit{Immediate opponents} to isolate the most-influential opponents of an agent playing a routing game of a graph.
Then, we propose ordinal rank based on the immediate opponents to find best responses for myopic agents, and propose an entropy-regulated fictitious play routine to find mixed-strategy NE in SPCOGs.

\subsection{Finding Immediate Opponents}
Consider the reachable sets $S_1, \dots, S_n$ (ignoring the time index since we focus on a single stage).
First, we define a secondary graph whose nodes are the set of agents, and the edges represent the mutuality of their feasible actions.
We refer to this graph as the ``separating graph" denoted by $\bar{\mathcal G}$.
The connected components of this graph represent a stage game that is localized to a subset of agents.
The formal definition follows below.
\begin{definition}[Separating Graph]
     The \textit{separating graph} is defined as $\Bar{\mathcal{G}} = (\Bar{\mathcal{N}}, \Bar{\mathcal{E}})$ whose set of nodes $\Bar{\mathcal{N}} = \mathcal{N}$, and the edges $\Bar{\mathcal{E}} = \{(i,j) : S_i \cap S_j \neq \emptyset, \forall i, j \in \Bar{\mathcal{N}}\}$.
\end{definition}
The set of \textit{connected components} in $\Bar{\mathcal{G}}$ is $\Lambda = \{\lambda_y :\lambda_y \subseteq \Bar{\mathcal{N}}, y = 1,2,\dots \}$ where each $\lambda_y$ is the largest connected component such that there is no path between any node in $\lambda_y$ to any node in $\lambda_z$, for any $\lambda_y, \lambda_z \in \Lambda, y \neq z$.
Further, for every $i \in \mathcal{N}$, define a mapping $\lambda(i) \in \Lambda$ such that $i \in \lambda(i)$.
From the definition of $\Lambda$, $\lambda(i)$ is well-defined.

There are efficient algorithms to compute the connected components $\Lambda^t$ of a finite graph rather straightforwardly in linear time, e.g., \cite{hopcroft1973algorithm}.
One can iterate through all the nodes in $\bar {\mathcal{G}}^t$, and launch a depth-first traversal at each node to find its neighbors, then insert them into a single connected component $\lambda_y^t$.
Once all the neighbors are visited, insert $\lambda_y^t$ into $\Lambda^t$.
Then, finding $\lambda(i,t)$ is simply a matter of finding the component of $i$.

\begin{definition}[Localized-Stage Game]
A subset of agents are participating in a localized stage game $y$ if and only if they belong to the same connected component $\lambda_y$.
\end{definition}
In other words, the actions of any agent in the subset $\lambda(i)$, except $i$ has an immediate impact on $i$'s actions.
Therefore, we refer to this subset of agents except $i$ as $i$'s ``immediate opponents''.
Finally, we introduce the concept of \textit{ordinal rank} (OR): a metric that summarizes positional, and rank information the immediate opponents.

\begin{definition}[Ordinal Rank]
For each agent $i \in \mathcal{N}$, we define its \textit{ordinal rank} as
\begin{equation}
\mathcal{I}_i = 1 + | \{j : j \in \lambda(i), j < i\}|.
\label{def:or}
\end{equation}
\end{definition}

By only communicating the OR information to an agent even without the explicit positional, rank, or reachability information of its opponents, it can choose best-response actions to their immediate opponents within a localized stage game.
When $\mathcal{G}$ is complete, the separating graph has only one connected component at any stage of the SPCOG, and thus, one's ordinal rank is equivalent to its global rank.

In the experiments, we observed that the strategies trained on local information conditioned on the OR significantly outperform those trained on the global state, or conditioned on the global rank.

Assuming a myopic agent is only interested in best-responding to its immediate opponents, it is sufficient for one to know information about its localized game, and \fix{the prizes} $\mathbf{p}^t$, rendering SPCOGs solvable with partial observations.
Thus, let $o_i^t \in \mathcal{O}_i$ define a local observation vector obtained from the mapping $\mathcal{O}_i: \mathcal{S} \rightarrow O_i$, where by $O_i$ \fix{we denote} $i$'s \textit{observation space}.
In our experiments, we show that one can in fact obtain reasonable best-response policies by using local information conditioned on OR, --implicitly indicating best-responding to localized stage games in fact lead to reasonable best-responses in the Markov SPCOG.
In this section, we will use the term ``policy'' instead of ``strategy'' to better align with the RL literature.

\subsection{Fictitious Ordinal Response Learning}
This subsection proposes an efficient algorithm for learning stationary policies in SPCOGs played on arbitrary graphs under independent learning paradigm.
In the proposed method, we sequentially update the agents' policies following their ranks one-at-a-time, assuming the others' policies remain stationary between updates --a concept central to the \textit{fictitious play} in classical game theory~\cite{shoham_multiagent_nodate}.
We will empirically show that this alternating fictitious play process will lead to best response policies against one's senior opponents when $\mathcal{G}$ is incomplete, and also yield PNE policies when $\mathcal{G}$ is complete.

Algorithm~\ref{algo:ec-brpl} outlines the proposed method.
During the initial stage (\textit{bootstrapping} stage), we only update the highest-ranked agent's policy, while the others follow the random policy (parameterized by $\theta^0_.$). \fix{This stems from the intuition that} any agent must first learn to solve the underlying single-agent orienteering problem \fix{under the environmental stochasticity which incorporates the senior agents' unknown actions, and the distribution of the prizes}.
\fix{The confidence of a policy is measured by its entropy that is, a policy with high entropy has a low confidence about its actions, and vice-versa.}
Once the highest-ranked agent's policy learns to solve the stochastic TOP with enough confidence, the learned parameters are shared with the other agents, therefore eliminating redundant learning to solve the single-agent OP.
In the second stage (fictitious-play stage), the agents update their parameters independently, and sequentially by playing against the opponents' ``stationary'' policies.

Algorithm~\ref{algo:ec-brpl} outlines the complete procedure.
We first initialize the algorithm with a hashmap $\Theta$ of random policy parameters of the agents, entropy-decaying step size $\delta h > 0$, an \fix{early-stopping entropy} $\mathcal{H}_{\mathrm{stop}}$, and an initial \textit{freezing point} (IFP) $\mathcal{H}^0$.
Further, $0 < \mathcal{H}_{\mathrm{stop}} < \mathcal{H}^0<\mathcal{H}_{\max}$, where $\mathcal{H}_{\mathrm{max}}$ is the highest possible entropy induced by the initial uniform random action selection, prize distribution, and the others' random behavior.
\fix{Ignoring the entropy contribution of the others, and the prizes, it holds that $\mathcal{H}_{\max} \leq \bar{\mathcal{H}}_{\max}$ where $\bar{\mathcal{H}}_{\max} = -\ln{\frac{1}{|\mathcal{V}|}}$, the entropy of the uniform random policy.}
Let $t_{\max}$ be the number of timesteps to train.
We use \texttt{step}$(a^t, s^t)$, and \texttt{update}$(\theta_j, r_j^t, o_j^{t-1}, a_j^{t}, o_j^{t})$ functions to evolve the environment by a single timestep, and to update the parameters of $\theta_j$ with stochastic gradient-descent.

At the beginning, $j=1$'s policy is updated while the others play randomly, until its entropy $\mathcal{H}[\pi_j^{\theta_j}]$ reaches the IFP that is $\mathcal{H}^0$ (\textbf{line 6}).
Then the parameters of the first policy are shared among the others, and the algorithm exits the bootstrapping stage (\textbf{line 8}).
Next, agent $j=2$ enters the fictitious play stage to update its policy playing against the updated policy of $j=1$, $\Theta[1]$, and the random policies of the others $\pi^{\theta^0}_{N\backslash [1,2]}$.
This process repeats until the last agents' policy reaches the IFP \textbf{(line 11)}.

\begin{algorithm}
\caption{FORL: Fictitious Ordinal Response Learning}
\label{algo:ec-brpl}
\KwIn{$\mathcal{H}^0$, $\delta h$, $\mathcal{H}_{\mathrm{stop}}$, $\Theta = \{i: \theta_i^0; \forall i \in \mathcal{N}\}$}
$\kappa(t)$, $j \gets 1$, $\mathcal{H}^{\kappa(t)} \gets \mathcal{H}^0$\tcp*{Initialization}
\While{$t \leq t_{\max}$}{
    $\theta_i \gets \Theta[i], \forall i \in \mathcal{N}$\;
    $\fix{a^t_i} \gets \fix{\pi_i^{\theta_i}[\cdot|O_i]}$, $\forall i \in \mathcal{N}$\;
    $s^t, r^t \gets \texttt{step}(a^{t}, s^{t-1})$\tcp*{Environment step}
    $\Theta[j] \gets \texttt{update}(\theta_j, r_j^t, o_j^{t-1}, a_j^{t}, o_j^{t})$\;
    \If{$\mathcal{H}[\pi_{j}^{\theta_j}] \fix{\leq} \mathcal{H}^{\kappa(t)}$}{
        \If{$j = 1 \land \kappa(t)=1$}{
            $\Theta[i] \gets \Theta[j]$ $\forall i \in \mathcal{N}$\tcp*{Bootstrapping}
        }
        $j \gets j + 1$; $\kappa(t) \gets t$\tcp*{Next policy}
        \If{$j \geq n$}{
            $\mathcal{H}^{\kappa(t)} \gets \mathcal{H}^{\kappa(t)} - \delta h$\tcp*{Reduce entropy}
            $j \gets 1$\tcp*{Reset $j$ for next round}
            \If{$\mathcal{H}^{\kappa(t)} < \mathcal{H}_{\mathrm{stop}}$}{
                \textbf{break}\tcp*{Exit while loop}
            }
        }
    }
    $t \gets t + 1$\;
}
\Return{$\Theta$}
\end{algorithm}
Then we reduce the entropy at the next freezing point by $\delta h$ amount \textbf{(line 12)}, and repeat the procedure.
The algorithm terminates when the last agents' policy reaches the prescribed early-stopping entropy $\mathcal{H}_{\mathrm{stop}}$ or till the total number of training timesteps $t_{\max}$ is reached \textbf{(line 14, 20)}.
Finally, the hashmap of the most recent policies are returned (\textbf{line 21}).
When the game does not possess a \fix{PNE}, the entropy of the policies is unlikely to reach 0, assuming $\mathcal{H}_{\mathrm{stop}}$ is set to 0.
The following proposition shows that Algorithm \ref{algo:ec-brpl} always terminates with a joint policy $\pi^*$ which is a PNE if one exists, and otherwise provides confidence bounds on the quality of the terminating policy.

\begin{proposition}
Suppose $t_{\max}$ is chosen sufficiently large and $\mathcal{H}_{stop} = 0$.  Then algorithm \ref{algo:ec-brpl} will terminate with a joint policy $\pi^* = \times_{i \in \mathcal{N}} \pi_i^{\theta_i^*}$ which satisfies
\begin{enumerate}[label=(\arabic*)]
    \item if the game has a PNE, policy $\pi^*$ is a PNE.
    \item otherwise, the confidence of the terminating policy $\pi^*$ satisfies
    \[\mathcal{H}(\pi_i^{\theta_i^*}) \in [\mathcal{H}^{\kappa(t_{\max} +1)}, \mathcal{H}^{\kappa(t_{\max})}], \forall i\in \mathcal N.\]
\end{enumerate}
\end{proposition}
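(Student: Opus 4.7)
The plan is to exploit the staircase structure of the freezing-point sequence $\mathcal{H}^{\kappa(\cdot)}$ produced by Algorithm~\ref{algo:ec-brpl}: it is non-increasing in $t$, constant between updates, and decreases in discrete jumps of size $\delta h$ only after all $n$ agents have completed a ``round'' of sequential policy updates (lines~11--13). With $\mathcal{H}_{\mathrm{stop}} = 0$ and $t_{\max}$ chosen large enough that the inner \texttt{update} loop can drive the currently-active agent's entropy to the active threshold before the budget is spent, termination occurs in exactly one of two ways: the freezing point crosses $0$ via line~17, or the outer counter reaches $t_{\max}$. These two regimes correspond to cases (1) and (2) of the proposition.

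For Part~(1), I would argue that if a PNE $\pi^\dagger = \times_{i} \pi_i^\dagger$ exists, then $\pi^\dagger$ is a fixed point of the sequential best-response dynamic that Algorithm~\ref{algo:ec-brpl} simulates: each $\pi_i^\dagger$ is by definition a best response to $\pi_{-i}^\dagger$ and, after invoking the fixed tie-breaking rule used in Theorem~\ref{thm:complete}, may be taken to be pure, hence of zero entropy. Combining this with convergence of the single-agent policy-gradient \texttt{update} against frozen opponents, one induction on the number of completed rounds shows that after sufficiently many rounds the parameters $\theta_i$ converge to a limit $\theta_i^*$ such that $\pi_i^{\theta_i^*}$ is a deterministic best response to $\pi_{-i}^{\theta^*}$. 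Because every agent's entropy can then be driven arbitrarily close to $0$, the freezing point is decremented until $\mathcal{H}^{\kappa(t)} < 0 = \mathcal{H}_{\mathrm{stop}}$, which triggers \texttt{break}. The returned $\pi^*$ is by construction a joint best-response profile and hence a PNE.

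For Part~(2), the claimed enclosure follows directly from the algorithm's invariants and the definition of $\kappa(\cdot)$. At termination via $t=t_{\max}$, the currently active freezing value is $\mathcal{H}^{\kappa(t_{\max})}$ and the immediately next one is, by line~12, $\mathcal{H}^{\kappa(t_{\max}+1)} = \mathcal{H}^{\kappa(t_{\max})} - \delta h$. The condition on line~7 certifies that an agent $j$ is advanced (and $\theta_j$ frozen) only after $\mathcal{H}[\pi_j^{\theta_j}]$ falls below the active threshold, so every already-advanced agent satisfies the upper bound $\mathcal{H}(\pi_i^{\theta_i^*}) \leq \mathcal{H}^{\kappa(t_{\max})}$. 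Conversely, since the threshold was \emph{not} lowered further during the current round, no agent's entropy has been pushed beneath the next would-be target $\mathcal{H}^{\kappa(t_{\max}+1)}$, yielding the lower bound. Agents whose turn has not yet arrived in the current round inherit their entropies from the previous round, which lie in the same interval after a one-index relabeling of $\kappa$.

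The main obstacle is justifying the implicit convergence hypothesis on \texttt{update}: the induction in Part~(1) requires that a bounded number of single-agent policy-gradient steps against a fixed opponent profile $\pi_{-j}$ suffices to drive $\mathcal{H}(\pi_j^{\theta_j})$ below the current threshold. This is standard for tabular or smoothly-parameterized policies under appropriate step-size and coverage conditions, but should either be adopted explicitly as a regularity assumption on \texttt{update} or imported as a citation; the role of ``$t_{\max}$ sufficiently large'' in the proposition is precisely to absorb the number of inner iterations needed to realize this convergence across all rounds. A secondary subtlety is ruling out cycles in the sequential best-response dynamic when a PNE exists, which is handled by the strict monotonicity of $\mathcal{H}^{\kappa(t)}$ together with the deterministic tie-breaking rule.
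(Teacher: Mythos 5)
Your proposal follows essentially the same route as the paper: the same two-case split (termination via the entropy threshold crossing $\mathcal{H}_{\mathrm{stop}}=0$ when a PNE exists, versus exhaustion of $t_{\max}$ otherwise), and the same sequential-freezing invariant $\mathcal{H}^{\kappa(t+1)} \leq \mathcal{H}(\pi_j^{\theta_j}) \leq \mathcal{H}^{\kappa(t)}$ to obtain the enclosure in Part (2). Your Part (1) is in fact more explicit than the paper's one-line argument (which only notes that a PNE is pure and hence has zero entropy), since you correctly flag that the conclusion additionally requires a convergence assumption on \texttt{update} and non-cycling of the sequential best-response dynamic — assumptions the paper leaves implicit.
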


\begin{proof}
\textbf{(Case 1)}. If the game has a PNE, denoted by $\pi^{\#} = \times_{i \in \mathcal{N}} \pi_i^{\theta^{\#}_i}$, then $\mathcal{H}[\pi^{\#}] = 0$ since it is a pure strategy. The algorithm will not terminate until reaching $\mathcal{H}_{\mathrm{stop}}=0$ for a large enough $t_{\max}$.
\textbf{(Case 2)}.
If there exist no PNEs, the $\mathcal{H}_{\mathrm{stop}}$ can not be reached, then the algorithm will terminate with $t=t_{\max}$ steps. From the sequential property of the algorithm, the relation
\[
\mathcal{H}^{\kappa(t+1)} \leq \mathcal{H}(\pi_j^{\theta_j^t}) \leq \mathcal{H}(\pi_i^{\theta_i^t}) \leq \mathcal{H}^{\kappa(t)},\]
holds for all $j < i$.
This implies that when the algorithm exits at $t = t_{\max}$, the confidence
\[
\mathcal{H}[\pi^{\theta_i^*}_i] \in [\mathcal{H}^{\kappa(t+1)}, \mathcal{H}^{\kappa(t)}],\quad \forall i.
\]
\end{proof}

Furthermore, as a direct consequence of the previous theorem, the following corollary shows that the hyperparameter $\mathcal{H}_{\mathrm{stop}}$ indicates the expected confidence level of the obtained policy $\pi^*$.

\begin{corollary}
Given a large enough $t_{\max}$, if $\mathcal{H}_{\mathrm{stop}}$ is set such that there exists a mixed NE policy $\tilde{\pi}^* = \times_{i \in \mathcal{N}} \pi_i^{\theta_i^*}$, and $\mathcal{H}_{\mathrm{stop}} \geq \mathcal{H}[\tilde{\pi}^*]$, then Algorithm 2 terminates with a terminating policy $\pi^*$ satisfying
\[
\mathcal{H}[\pi^*] \leq \mathcal{H}_{\mathrm{stop}}.
\]
\end{corollary}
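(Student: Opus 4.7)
My plan is to treat this corollary as a direct consequence of the previous proposition together with the defining property of the mixed NE $\tilde{\pi}^*$. The key observation is that the assumption $\mathcal{H}_{\mathrm{stop}} \geq \mathcal{H}[\tilde{\pi}^*]$ supplies an attainable ``entropy floor'' for the fictitious-play iterates produced by Algorithm \ref{algo:ec-brpl}, which prevents the algorithm from stalling at threshold levels above $\mathcal{H}_{\mathrm{stop}}$ and guarantees termination through the break at line 15 rather than by exhausting $t_{\max}$.

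The first step of the argument is to note that $\tilde{\pi}^*$ is, by definition, a fixed point of the sequential best-response map implemented in the fictitious-play stage: for each agent $i$, the component $\pi_i^{\theta_i^*}$ is a best response to $\pi_{-i}^{\theta_{-i}^*}$, which is precisely what the \texttt{update} routine converges to when opponents are frozen. Consequently, at every freezing point with $\mathcal{H}^{\kappa(t)} \geq \mathcal{H}[\tilde{\pi}^*]$ there exists a feasible agent-$j$ policy whose entropy already satisfies the line-7 test $\mathcal{H}[\pi_j^{\theta_j}] \leq \mathcal{H}^{\kappa(t)}$, so each threshold in the grid $\mathcal{H}^0, \mathcal{H}^0-\delta h, \ldots$ is cleared by every agent in finitely many updates. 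For $t_{\max}$ sufficiently large, the outer loop therefore decrements $\mathcal{H}^{\kappa(t)}$ all the way past $\mathcal{H}_{\mathrm{stop}}$ and exits via line 15. Invoking the interval bound from Proposition 1 (Case 2) at the exit step, the terminating $\pi^*$ satisfies $\mathcal{H}[\pi_i^{\theta_i^*}] \leq \mathcal{H}[\tilde{\pi}^*] \leq \mathcal{H}_{\mathrm{stop}}$ for every $i\in\mathcal{N}$, which is exactly the claim.

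The main obstacle will be rigorously justifying that the \texttt{update} step actually drives each agent's entropy down to (or below) that of the mixed-NE component $\pi_i^{\theta_i^*}$, rather than cycling around it. In general stochastic games, fictitious-play dynamics need not converge to mixed equilibria. I would address this either by restricting to game structures where fictitious-play convergence is classical---such as the complete- and star-graph cases covered by Theorem \ref{thm:complete}---or by exploiting the slack $\mathcal{H}_{\mathrm{stop}} - \mathcal{H}[\tilde{\pi}^*] \geq 0$, so that only approximate convergence toward $\mathcal{H}_{\mathrm{stop}}$, and not all the way to $\mathcal{H}[\tilde{\pi}^*]$, is required. This weaker requirement can be certified using the monotone decrement of $\mathcal{H}^{\kappa(t)}$ combined with the existence of $\tilde{\pi}^*$ as a consistent joint best response, together with a standard entropy continuity argument around $\tilde{\pi}^*$.
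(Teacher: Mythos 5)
The paper itself supplies no written proof here---the corollary is asserted as a ``direct consequence'' of the preceding proposition---and your overall route is exactly the intended one: the existence of a mixed NE with $\mathcal{H}[\tilde{\pi}^*]\leq\mathcal{H}_{\mathrm{stop}}$ is used only to certify that every entropy threshold down to $\mathcal{H}_{\mathrm{stop}}$ is attainable, so the algorithm terminates through the early-stopping break rather than by exhausting $t_{\max}$, and the bound on $\mathcal{H}[\pi^*]$ then comes from the break condition. You also correctly flag the real unstated assumption (that the \texttt{update} routine actually drives each agent's entropy down to the threshold, i.e.\ that the fictitious-play iterates do not cycle), which the paper glosses over as well.

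However, your final step is mis-derived. You conclude $\mathcal{H}[\pi_i^{\theta_i^*}]\leq\mathcal{H}[\tilde{\pi}^*]\leq\mathcal{H}_{\mathrm{stop}}$ ``by invoking the interval bound from Proposition 1 (Case 2),'' but that interval bound is $\mathcal{H}[\pi_i^{\theta_i^*}]\in[\mathcal{H}^{\kappa(t+1)},\mathcal{H}^{\kappa(t)}]$: the upper end is the \emph{threshold}, not the entropy of the mixed NE. Nothing in the algorithm guarantees that the terminating policy has entropy at most $\mathcal{H}[\tilde{\pi}^*]$---the iterates need not converge to $\tilde{\pi}^*$, and a best response to frozen opponents can have any entropy below the current threshold. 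The correct chain is purely threshold-based: when the break at line 15 fires, every agent has just cleared the last threshold before it was decremented below $\mathcal{H}_{\mathrm{stop}}$, so $\mathcal{H}[\pi_i^{\theta_i^*}]$ is bounded by that last cleared threshold (which, read literally from the algorithm, only gives $\mathcal{H}[\pi^*]<\mathcal{H}_{\mathrm{stop}}+\delta h$ rather than $\leq\mathcal{H}_{\mathrm{stop}}$; the discrepancy is a looseness in the statement itself, but you should not paper over it by routing the bound through $\mathcal{H}[\tilde{\pi}^*]$). Replace the final inequality with the threshold argument and the proof matches the intended one.
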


Above, we use the \textit{maximum entropy principle} to guarantee that FORL continues until either the last agent's policy meets $\mathcal{H}_{\mathrm{stop}}$ or the maximum steps are reached.

It is also possible to set $\mathcal{H}^0$ to $\mathcal{H}_{\max}$.
In this case, the entropy of each policy must decay from $\mathcal{H}_{\max}$ to $\mathcal{H}_{\mathrm{stop}}$ without bootstrapping as opposed to decaying from $\mathcal{H}^0$ to $\mathcal{H}_{\mathrm{stop}}$ for all agents except for the first one.
By setting $\mathcal{H}^{0}$ smaller than $\mathcal{H}_{\max}$, only the first agent's policy needs updating until $\mathcal{H}^{0}$, due to the bootstrapping.
\fix{In the experiments, we used $\mathcal{H}^{0}$ between $0.8 \mathcal{H}_{\max}$-$0.6 \mathcal{H}_{\max}$.}
Further, due to the sequential-nature of FORL, the experiences of the other agents except for the one who is being trained are discarded; a pretext for poor sample efficiency, especially with large batches.
However, compared to IPL, and parameter-sharing, FORL offered better training throughput with medium-sized batches ($\sim 2500$).

\subsection{Ordinal Rank Conditioning for Parameter-Sharing}
We augment the agent's policy $\pi_i^{\theta_i}$ by incorporating the OR into its observation space.
\fix{Specifically, instead of conditioning the policy solely on the local observation $O_i$, the policy is now conditioned on the augmented input $(O_i,\mathcal{I}_i)$.}
\fix{Thus, policy used here reads},
\begin{equation}
\pi_i^{\theta_i}\Big[A_i\mid O_i, \mathcal{I}_i \Big].
\label{eq:local_obs}
\end{equation}
The OR conditioning introduces ``state-aliasing'' to the learning process thus, preventing the policies overfitting to global ranks: the same global rank may map to different ORs depending on one's neighborhood at each stage, thereby increasing the variance, and providing better interpretability for the accumulated experiences.
This conditioning provides a particular advantage when the policy parameters are shared between the agents, i.e., $(\theta_1 = \theta_2 = \dots = \theta_n)$: we observe that these policies generalize much better to large team sizes compared to those conditioned on the global rank.

\noindent \textbf{Note on the scalability}: Even when a policy conditioned on the OR was deployed on an agent with a global rank that was absent in the training, the policy may still have encountered its OR during training, and can thus find ordinal best responses to its immediate opponents.

\subsection{Action Masking and Policy Modeling}

The observation space of an agent $i$ used for training contains its local state $\mathcal{S}_i$, the prize state $\mathbf{p}^t$, ordinal rank $\mathcal{I}_i(t)$, and an \textit{action mask} $\mathbf{m}_i^t \in \{1,0\}^{|\mathcal{V}|}$ where
\begin{equation*}
    \mathbf{m}_i^t[u] =
    \begin{cases}
        0; \text{ if } (x_i^t, u) \notin \mathcal{E}, \\
        1; \text{ otherwise.}
    \end{cases}
\end{equation*}
Here, $\mathbf{m}_i^t[u]$ denotes the mask corresponding to the $u$-th node $u \in \mathcal{V}$.
Then the policy outputs corresponding to masked nodes are manually set to 0, and renormalized: $\pi_i^{\theta_i}(a_i = u|o_i^t) \gets \frac{\pi_i^{\theta_i}(a_i^t = u|o_i^t) \odot \mathbf{m}^t_i[u]} {\sum_{u} \pi_i^{\theta_i}(a_i^t = u|o_i^t)}$ for all $u \in \mathcal{V}$, and `$\odot$' is the element-wise product between two vectors.
Masking however, can result in a lower $\mathcal{H}_{\max}$ compared to the non-masked case which is equivalent to the entropy of the uniform random policy with support $[0,|\mathcal{V}|]$, as the actions corresponding to non-neighboring nodes are explicitly ignored.
Therefore, a safer method for setting the $\mathcal{H}^{0}$ is to use the empirical maximum entropy of the game at the beginning of the training process.

We adopt a Transformer-based neural network architecture, TRXL-I (see~\cite{parisotto_stabilizing_2019} for details), to model the policies.
Each policy comprises 6 transformer units that have been shared between the policy and the value network.
Each transformer unit has 6 attention heads of dimensionality of 128, and an attention memory of 10 observations.
The output of the transformer sequence is passed to two linear layers in parallel: one for estimating the value-function output, and the second for computing the policy outputs.

\section{Experiments and Results}

\begin{figure}[t]
  \centering
  \begin{subfigure}[t]{0.65\columnwidth}
    \includegraphics[width=\linewidth,trim={0cm 11cm 0cm 0cm},clip]{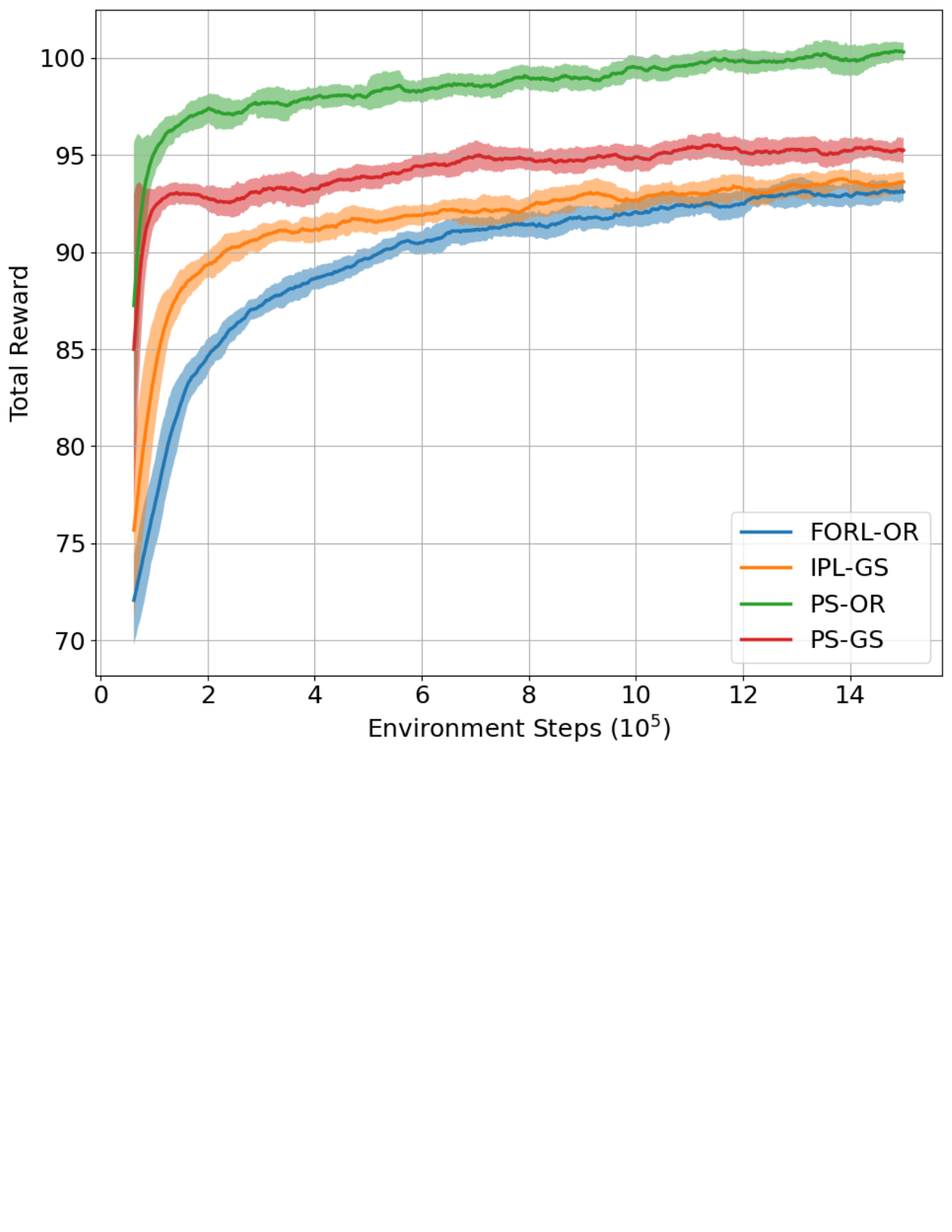}
    \caption{Training}
  \end{subfigure}
  \hfill
  \begin{subfigure}[t]{0.31\columnwidth}
    \includegraphics[width=\linewidth,trim={0cm -1cm 4cm 0cm},clip]{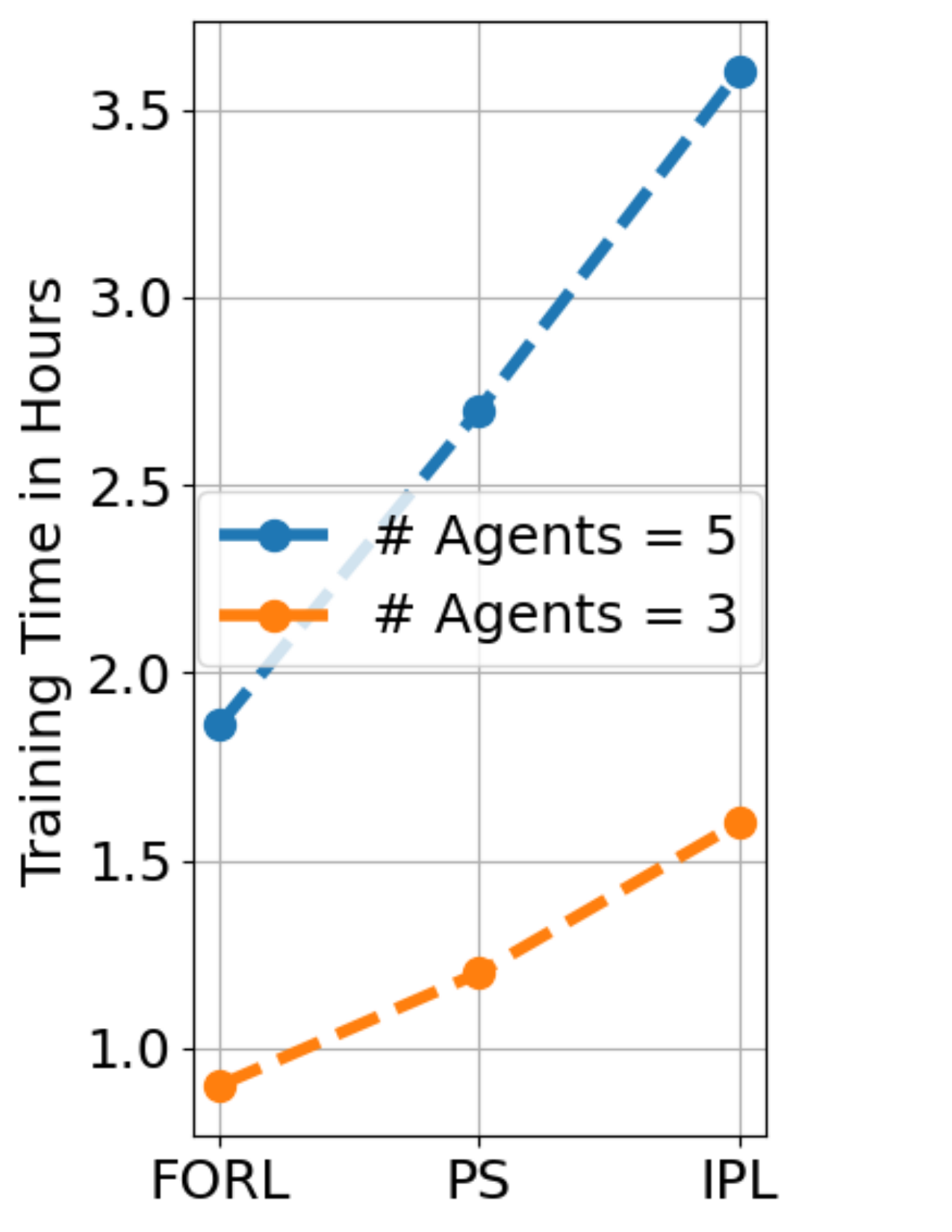}
    \caption{Time}
  \end{subfigure}
  \caption{\small Convergence (a) and training time (b) for 5 agents on the Manhattan graph with dynamic prizes. OR/GS denote Ordinal Rank and Global State conditioning.}
  \label{fig:convergence}
\end{figure}

\subsection{Simulation Environment}

We implemented the SPCOG on a custom Gym environment \cite{terry2021pettingzoogymmultiagentreinforcement}, trained on Ray RLlib \cite{liang2017ray} for improving the sampling process via multi-agent parallel execution.
We evaluated the proposed approach on real-world road-networks defined on a \SI{500}{\meter} $\times$ \SI{500}{\meter} area of Stockholm, Manhattan road networks under static, and dynamic prizes.
Each road network graph consists of 20 and 50 nodes, where each \textit{dead-end} node served as a terminal node $d$ with equal payoffs $p_d$.
The two graphs were chosen due to their unique ``ring'', and ``grid'' like structures. See Fig.~\ref{fig:cities} at the end.
During an episode, the agents were initialized randomly, i.e., each $s_i, \forall i$ were chosen uniformly from non-terminal nodes.

The prizes of non-terminal nodes were drawn from $\mathbf{p}^t \sim \times_{u \in \mathcal{V}} \texttt{Uniform}_{u}(0,10)$ whereas each $p_d$ were set to 15.
The policies were updated using the Proximal Policy Optimization (PPO) algorithm with batches of 2500 observations, mini-batches of 200, and 10 stochastic gradient descent iterations per update.

\subsection{Training Results}

Fig.~\ref{fig:convergence}-(a) shows the training results for IPL, FORL, and parameter sharing (PS) methods in the Manhattan environment with dynamic prizes for a team of 5 agents.
The policies trained using IPL were provided with the global-state vector (IPL-GS), FORL with the local observations conditioned on the OR according to Eq.~\eqref{eq:local_obs} (FORL-OR), PS using both global and local information vectors, denoted by PS-GS, PS-OR, respectively.
The OR-conditioned policies trained using FORL converged to almost the same total reward profile as IPL-GS, and PS-GS: thus, indicating that OR can indeed successfully encode the global state in prize-collecting games, and serve as a strong inductive bias when global state observation is not feasible.

The parameter shared policies conditioned on the OR (PS-OR), significantly outperformed PS-GS, and also the shared policies conditioned on the global rank.
While we do not show the training curve of the latter for clarity, we provide a quantitative comparison between the two on their generalizability in Fig.~\ref{fig:generalizability}-(b).
Generally, training methods where the parameters of the policy or the value function are shared among the agents tend to score higher total rewards compared to independent learning methods such as FORL, IPL, due to agents being able to reason about the others' policies symmetrically.
In addition to that, the state-aliasing induced by the OR conditioning allows the shared policies to collect more experiences for the same OR across the agents leading to more exploration, and thus, better performances.

\begin{figure}[t]
    \centering
    \begin{subfigure}[t]{0.48\columnwidth}
      \includegraphics[width=\linewidth,trim={0cm 9.5cm 0cm 0cm},clip]{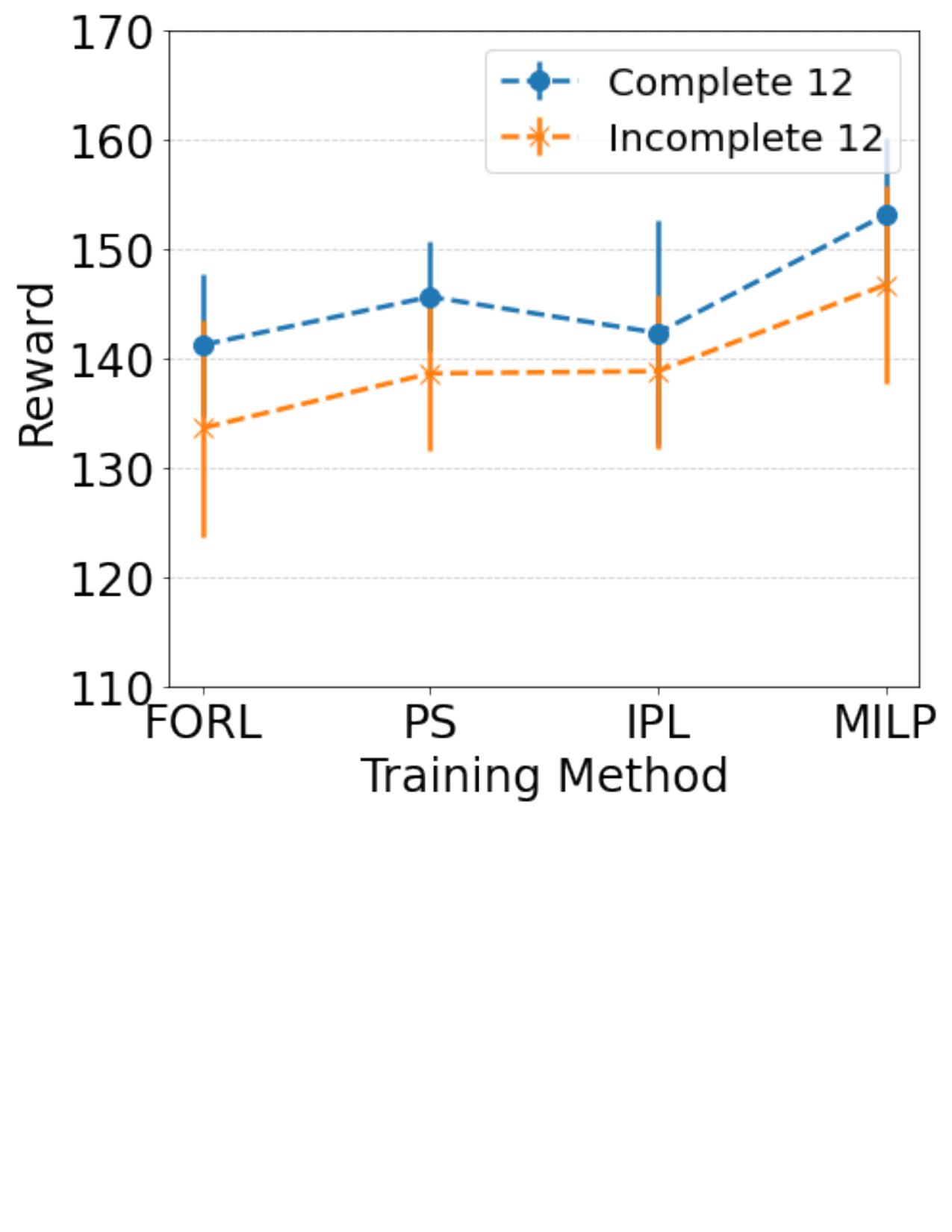}
      \caption{Optimality gap}
    \end{subfigure}
    \hfill
    \begin{subfigure}[t]{0.48\columnwidth}
      \includegraphics[width=\linewidth,trim={0cm 9.5cm 0cm 0cm},clip]{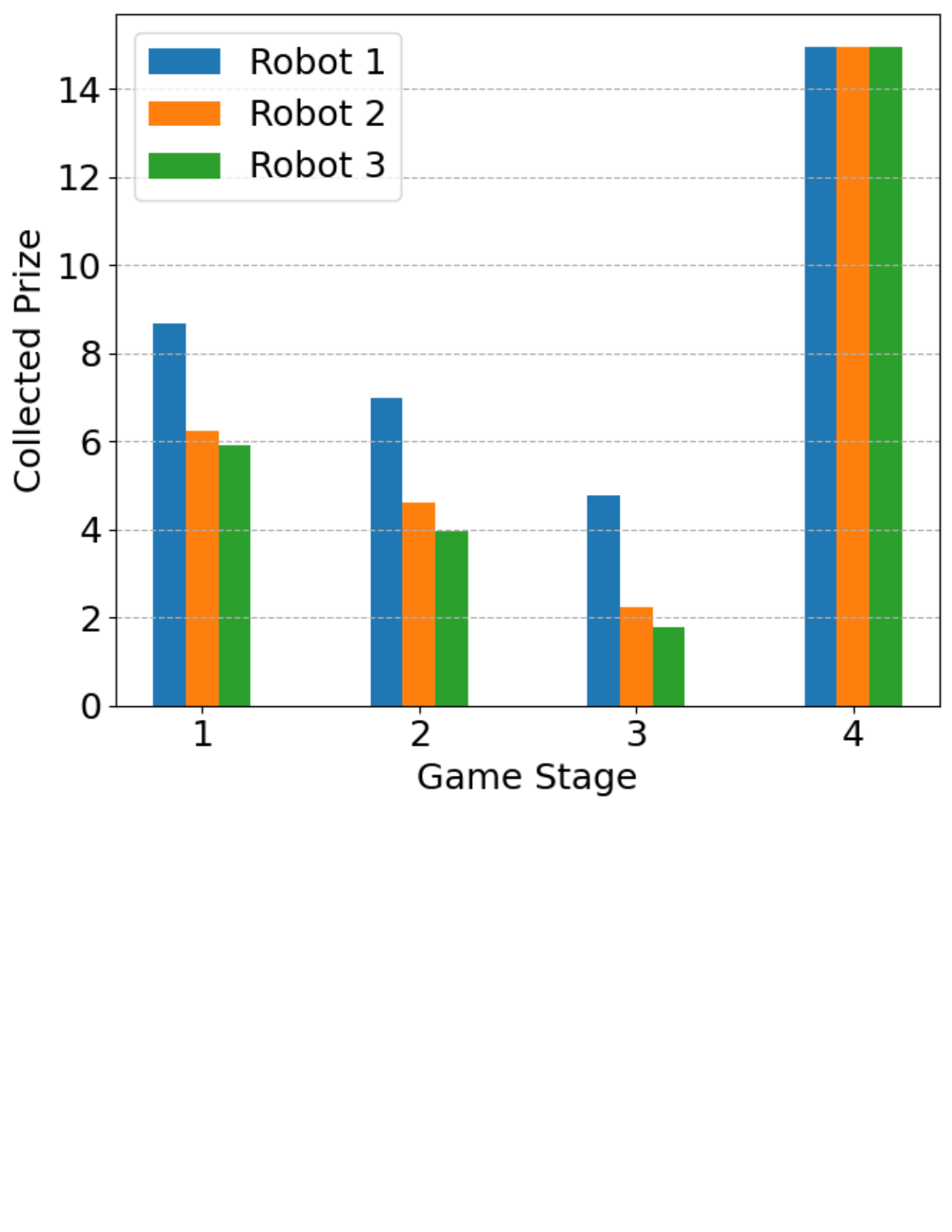}
      \caption{Stage rewards}
    \end{subfigure}
    \caption{\small (a) Optimality gap between SPCOG and the equivalent TOP. (b) Per-stage prizes in a complete graph under FORL ($L=4$, first 3 agents shown).}
    \label{fig:stationary}
\end{figure}

Fig.~\ref{fig:convergence}-(b) shows the training time for each method.
Noticeably, our method takes much less training time to reach approximately the same total team-reward profile as PS-GS, and IPL, with speed-up factors 1.56 and 2.2, to train for the same number of environment steps, and gradient descent iterations.

In Fig.~\ref{fig:stationary}, we report total rewards in SPCOGs played on two graphs (complete and incomplete random graphs that have 12 nodes) with stationary prizes for each training method, and compared them to an equivalent TOP whose solution was obtained by solving a mixed-integer linear program (MILP) as presented in~\cite{gunawan_orienteering_2016} using Gurobi.
In the complete graph, PS-OR reports the highest total reward that is $95\%$ the optimal solution.
In the incomplete graph, the worst case equilibria was obtained with FORL with a total reward that is 87\% of the optimal solution.

Fig.~\ref{fig:stationary}-(b) shows the prizes collected by the agents at each stage in a game of 4 stages ($L = 4$), on a complete graph (only the prizes of the first three agents trained using FORL are shown).
The PNE of the game states that the prizes collected at each stage obey the agents' ranks.
Similarly, we observe that the senior-most agent (robot 1) collects the highest prize at each stage, while robot 2 and robot 3 pick the second and third largest prizes.
In the final stage, all the agents reach the terminal node receiving the terminal reward.

\subsection{Generalizability and Scalability}

We evaluated the policies' generalizability by \textit{zero-shot} transferring them onto an imbalanced prize distribution that resembles a demand-driven taxi-fare distribution in a city, i.e.,$p^t_{u} \sim \mathrm{Normal}(\beta_u, \sigma)$, where $\beta_u$ is the mean prize at a node that is inversely proportional to the proximity to the map center, whereas $\sigma$ is the standard deviation.

\begin{figure}[t]
  \centering
  \begin{subfigure}[t]{0.48\columnwidth}
    \includegraphics[width=\linewidth,trim={0cm 9.5cm 0cm 0cm},clip]{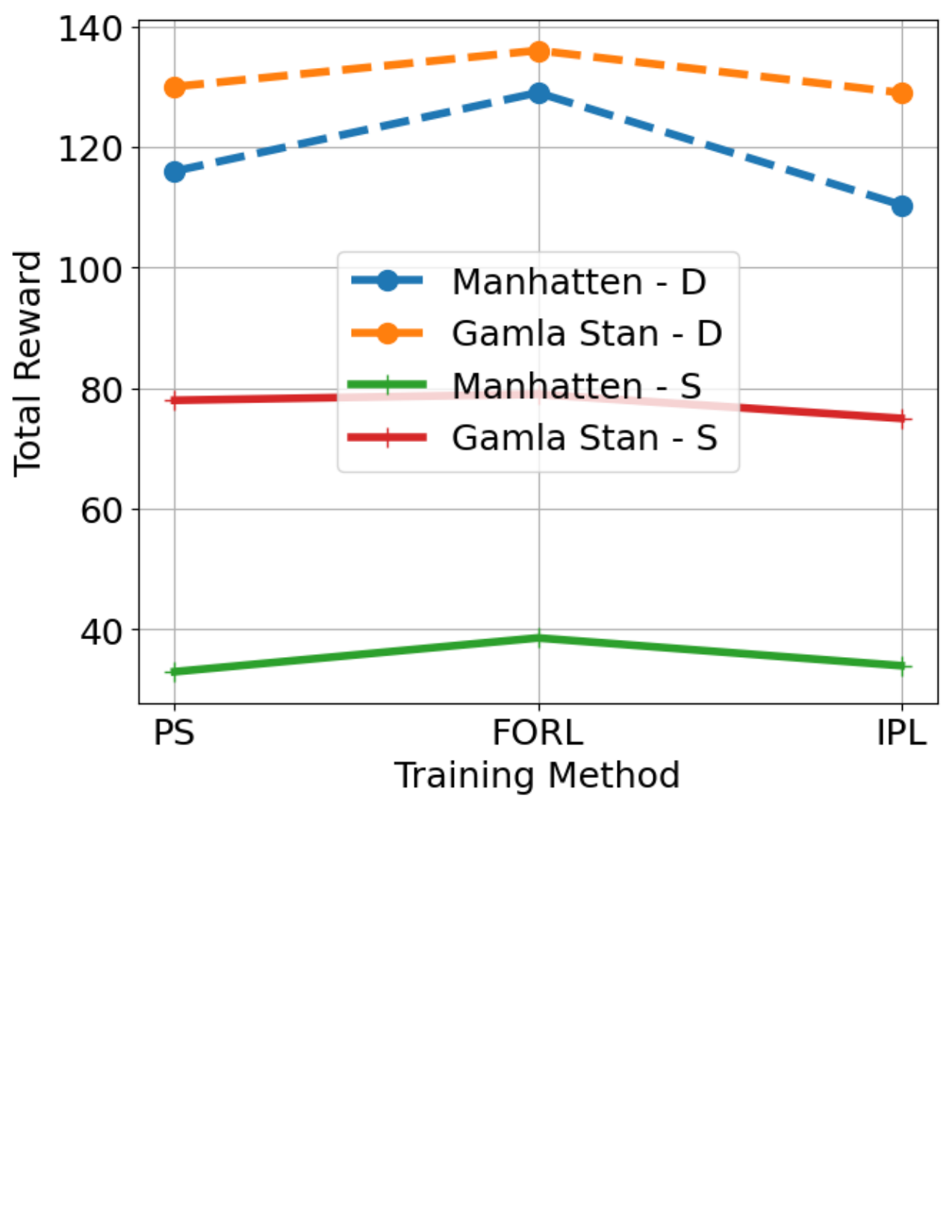}
    \caption{Zero-shot transfer}
  \end{subfigure}
  \hfill
  \begin{subfigure}[t]{0.48\columnwidth}
    \includegraphics[width=\linewidth,trim={0cm 9.5cm 0cm 0cm},clip]{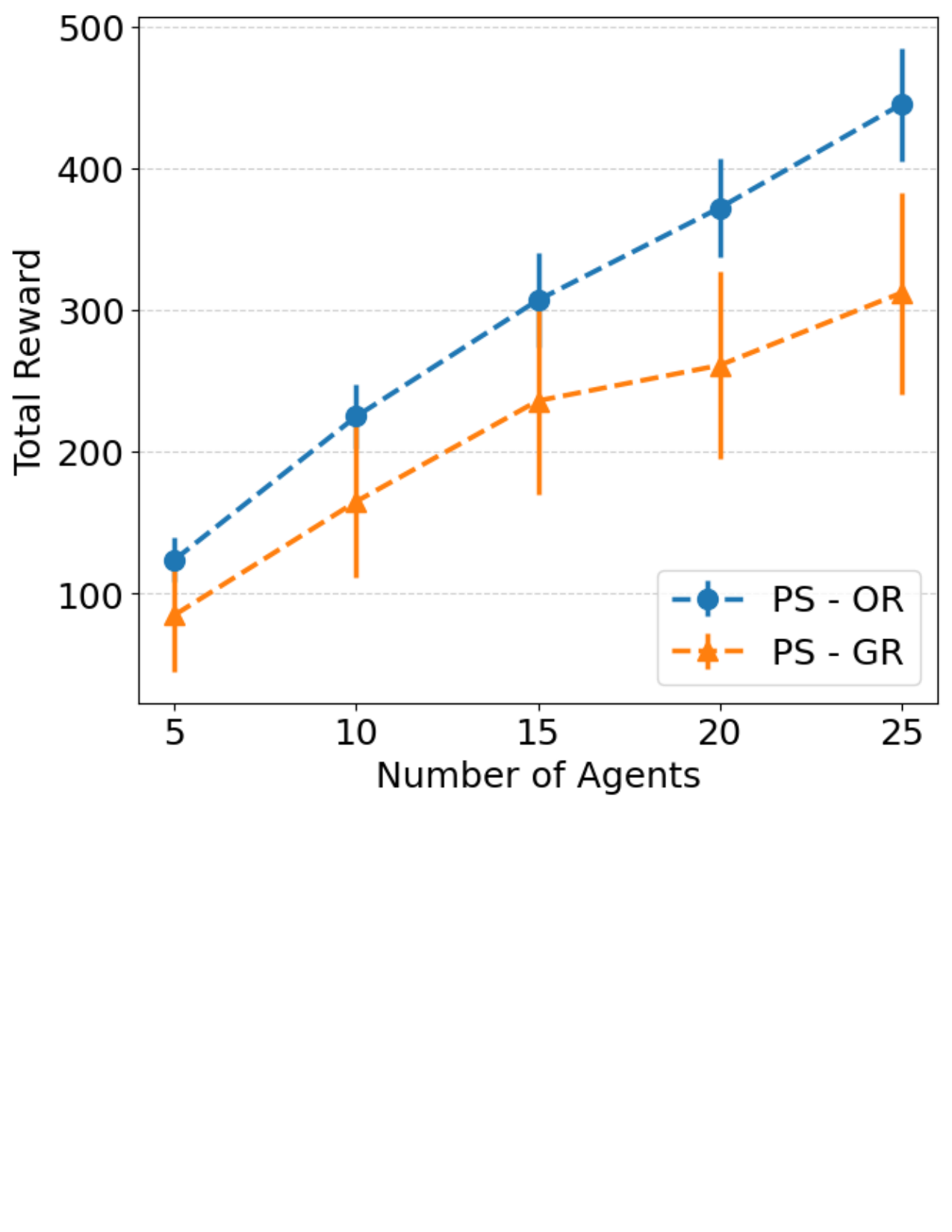}
    \caption{Scalability}
  \end{subfigure}
  \caption{\small (a) Zero-shot transfer to region-specific normal prizes (D: dynamic, S: stationary). (b) Scalability of PS-OR vs.\ PS-GR (global rank) up to 25 agents.}
  \label{fig:generalizability}
\end{figure}

Fig.~\ref{fig:cities} shows the joint distribution of the prizes in the two city graphs partitioned into four and five price zones.
The policies trained using FORL outperformed both PS-OR, and IPL-GS in generalizing to imbalanced prize distributions in both the cities, under both the stationary and dynamic settings (see Fig.~\ref{fig:generalizability}-a).
Finally, we evaluated the scalability of the shared policies to different team sizes on the Manhattan environment with dynamic prizes.
Both of the considered policies were trained using local information, but, conditioned on the ordinal rank (PS-OR), and the global rank (PS-GR).
The total reward under PS-OR scaled linearly up to teams of 25 agents with only marginal increment in the variance, whereas PS-GR showed poor, sublinear, total-reward gains with much higher variance for the same team sizes (see Fig.~\ref{fig:generalizability}-b).

\section{Conclusion}
We introduced SPCOG, a Markov game formulation of the TOP for planning in self-interested multi-agent teams operating in stochastic, graph-structured environments.
A theoretical analysis conducted on complete graphs shows that there exists a PNE that coincides with the optimal routing solution under a rank-based conflict resolution.
We propose the concept of OR --the effective rank of an agent within a localized neighborhood to choose local best responses at each stage of the game.
The state aliasing induced by the OR in parameter-sharing MARL also allows learning policies that significantly outperform those conditioned on the global ranks.
Finally, we propose a fictitious play-inspired entropy regulated training scheme ``FORL" to find convergent policies on general graphs.
We show the learned policies achieve great scalability when conditioned on the OR, and an optimality gap of 87\%--95\% compared to MILP solution of an equivalent TOP.

\bibliographystyle{IEEEtran}
\bibliography{root}

@misc{terry2021pettingzoogymmultiagentreinforcement,
      title={PettingZoo: Gym for Multi-Agent Reinforcement Learning}, 
      author={J. K. Terry and Benjamin Black and Nathaniel Grammel and Mario Jayakumar and Ananth Hari and Ryan Sullivan and Luis Santos and Rodrigo Perez and Caroline Horsch and Clemens Dieffendahl and Niall L. Williams and Yashas Lokesh and Praveen Ravi},
      year={2021},
      eprint={2009.14471},
      archivePrefix={arXiv},
      primaryClass={cs.LG},
      url={https://arxiv.org/abs/2009.14471}, 
}

@article{hopcroft1973algorithm,
  title={Algorithm 447: efficient algorithms for graph manipulation},
  author={Hopcroft, John and Tarjan, Robert},
  journal={Communications of the ACM},
  volume={16},
  number={6},
  pages={372--378},
  year={1973},
  publisher={ACM New York, NY, USA}
}

@INPROCEEDINGS{8206597,
  author={Ma, Zhibei and Yin, Kai and Liu, Lantao and Sukhatme, Gaurav S.},
  booktitle={2017 IEEE/RSJ International Conference on Intelligent Robots and Systems (IROS)}, 
  title={A spatio-temporal representation for the orienteering problem with time-varying profits}, 
  year={2017},
  volume={},
  number={},
  pages={6785-6792},
  keywords={Routing;Planning;Topology;Autonomous vehicles;Production facilities;Navigation},
  doi={10.1109/IROS.2017.8206597}}

@article{chao1996team,
  title={The team orienteering problem},
  author={Chao, I-Ming and Golden, Bruce L and Wasil, Edward A},
  journal={European journal of operational research},
  volume={88},
  number={3},
  pages={464--474},
  year={1996},
  publisher={Elsevier}
}

@article{gkatzelis2016optimal,
  title={Optimal cost-sharing in general resource selection games},
  author={Gkatzelis, Vasilis and Kollias, Konstantinos and Roughgarden, Tim},
  journal={Operations Research},
  volume={64},
  number={6},
  pages={1230--1238},
  year={2016},
  publisher={Informs}
}

@INPROCEEDINGS{9636854,
  author={Mansfield, Ariella and Manjanna, Sandeep and Macharet, Douglas G. and Ani Hsieh, M.},
  booktitle={2021 IEEE/RSJ International Conference on Intelligent Robots and Systems (IROS)}, 
  title={Multi-robot Scheduling for Environmental Monitoring as a Team Orienteering Problem}, 
  year={2021},
  volume={},
  number={},
  pages={6398-6404},
  keywords={Costs;Evolutionary computation;Numerical simulation;Agriculture;Environmental monitoring;Task analysis;Intelligent robots},
  doi={10.1109/IROS51168.2021.9636854}}

@article{Fernando2023GraphAM,
  title={Graph Attention Multi-Agent Fleet Autonomy for Advanced Air Mobility},
  author={Malintha Fernando and Ransalu Senanayake and Heeyoul Choi and Martin Swany},
  journal={Robotics: Science and Systems XIX},
  year={2023}
}

@article{shen2023dynamic,
  title={A dynamic task assignment model for aviation emergency rescue based on multi-agent reinforcement learning},
  author={Shen, Yang and Wang, Xianbing and Wang, Huajun and Guo, Yongchen and Chen, Xiang and Han, Jiaqi},
  journal={Journal of Safety Science and Resilience},
  volume={4},
  number={3},
  pages={284--293},
  year={2023},
  publisher={Elsevier}
}

@article{gui2024collaborative,
  title={Collaborative dynamic scheduling in a self-organizing manufacturing system using multi-agent reinforcement learning},
  author={Gui, Yong and Zhang, Zequn and Tang, Dunbing and Zhu, Haihua and Zhang, Yi},
  journal={Advanced Engineering Informatics},
  volume={62},
  pages={102646},
  year={2024},
  publisher={Elsevier}
}

@article{choi2009consensus,
  title={Consensus-based decentralized auctions for robust task allocation},
  author={Choi, Han-Lim and Brunet, Luc and How, Jonathan P},
  journal={IEEE transactions on robotics},
  volume={25},
  number={4},
  pages={912--926},
  year={2009},
  publisher={IEEE}
}

@inproceedings{bouyer2019computation,
  title={On the computation of Nash equilibria in games on graphs (invited talk)},
  author={Bouyer, Patricia},
  booktitle={26th International Symposium on Temporal Representation and Reasoning (TIME 2019)},
  pages={3--1},
  year={2019},
  organization={Schloss Dagstuhl--Leibniz-Zentrum f{\"u}r Informatik}
}

@article{yu2021optimizing,
  title={Optimizing task scheduling in human-robot collaboration with deep multi-agent reinforcement learning},
  author={Yu, Tian and Huang, Jing and Chang, Qing},
  journal={Journal of manufacturing systems},
  volume={60},
  pages={487--499},
  year={2021},
  publisher={Elsevier}
}

@article{gunawan_orienteering_2016,
	title = {Orienteering {Problem}: {A} survey of recent variants, solution approaches and applications},
	volume = {255},
	issn = {03772217},
	shorttitle = {Orienteering {Problem}},
	url = {https://linkinghub.elsevier.com/retrieve/pii/S037722171630296X},
	doi = {10.1016/j.ejor.2016.04.059},
	language = {en},
	number = {2},
	urldate = {2025-01-19},
	journal = {European Journal of Operational Research},
	author = {Gunawan, Aldy and Lau, Hoong Chuin and Vansteenwegen, Pieter},
	month = dec,
	year = {2016},
	pages = {315--332},
	file = {Gunawan et al. - 2016 - Orienteering Problem A survey of recent variants,.pdf:/Users/malin/Zotero/storage/2V8RNSWC/Gunawan et al. - 2016 - Orienteering Problem A survey of recent variants,.pdf:application/pdf},
}

@article{hu2003nash,
  title={Nash Q-learning for general-sum stochastic games},
  author={Hu, Junling and Wellman, Michael P},
  journal={Journal of machine learning research},
  volume={4},
  number={Nov},
  pages={1039--1069},
  year={2003}
}

@article{shoham_multiagent_nodate,
	title = {Multiagent {Systems}: {Algorithmic}, {Game}-{Theoretic}, and {Logical} {Foundations}},
	language = {en},
	author = {Shoham, Yoav},
	pages = {532},
	file = {Shoham - Multiagent Systems Algorithmic, Game-Theoretic, a.pdf:/Users/malin/Zotero/storage/HDBU4ED2/Shoham - Multiagent Systems Algorithmic, Game-Theoretic, a.pdf:application/pdf},
}

@article{murray2020prize,
  title={Prize collecting multiagent orienteering: Price of anarchy bounds and solution methods},
  author={Murray, Timothy and Garg, Jugal and Nagi, Rakesh},
  journal={IEEE Transactions on Automation Science and Engineering},
  volume={19},
  number={1},
  pages={531--544},
  year={2020},
  publisher={IEEE}
}

@article{liang2017ray,
  title={Ray rllib: A composable and scalable reinforcement learning library},
  author={Liang, Eric and Liaw, Richard and Nishihara, Robert and Moritz, Philipp and Fox, Roy and Gonzalez, Joseph and Goldberg, Ken and Stoica, Ion},
  journal={arXiv preprint arXiv:1712.09381},
  volume={85},
  year={2017}
}

@article{park_learn_nodate,
	title = {Learn to solve the min-max multiple traveling salesmen problem with reinforcement learning},
	abstract = {We propose ScheduleNet, a scalable scheduler that minimizes the task completion time by coordinating multiple agents. We formulate the min-max multiple traveling salesmen problem (mTSP) as a Markov decision process with an episodic reward and derive a scalable decision-making policy using reinforcement learning (RL). The decision-making procedure of ScheduleNet includes: (1) representing the state of a problem with the agent-task graph, (2) extracting node embedding for agents and tasks by employing the type-aware graph attention, (3) and computing the task assignment probability with the computed node embedding. We show that ScheduleNet can outperform other heuristic approaches and existing deep RL approaches, particularly validating its exceptional effectiveness in solving large and practical problems. We also confirm that ScheduleNet can effectively solve practical mTSP variants, which include limited observation and online mTSP.},
	language = {en},
	author = {Park, Junyoung and Kwon, Changhyun and Park, Jinkyoo},
	file = {Park et al. - Learn to solve the min-max multiple traveling sale.pdf:/Users/malin/Zotero/storage/T2574DFQ/Park et al. - Learn to solve the min-max multiple traveling sale.pdf:application/pdf},
}

@article{varakantham_direct_nodate,
	title = {{DIRECT}: {A} scalable approach for route guidance in {Selfish} {Orienteering} {Problems}},
	abstract = {We address the problem of crowd congestion at venues like theme parks, museums and world expos by providing route guidance to multiple selﬁsh users (with budget constraints) moving through the venue simultaneously. To represent these settings, we introduce the Selﬁsh Orienteering Problem (SeOP) that combines two well studied problems from literature, namely Orienteering Problem (OP) and Selﬁsh Routing (SR). OP is a single agent routing problem where the goal is to minimize latency (or maximize reward) in traversing a subset of nodes while respecting budget constraints. SR is a game between selﬁsh agents looking for minimum latency routes from source to destination along edges of a network available to all agents. Thus, SeOP is a multi-agent planning problem where agents have selﬁsh interests and individual budget constraints. As with Selﬁsh Routing, we employ Nash Equilibrium as the solution concept in solving SeOP. A direct mathematical program formulation to ﬁnd a Nash equilibrium in SeOP cannot scale because the number of constraints is quadratic in the number of paths, which itself is an exponential quantity. To address scalability issues, we make two key contributions. First, we provide a compact non-pairwise formulation with linear number of constraints in the number of paths to enforce the equilibrium condition. Second, we introduce DIRECT, an incremental and iterative master-slave decomposition approach to compute an approximate equilibrium solution. Similar to existing ﬂow based approaches, DIRECT is scale invariant in the number of agents. We also provide a theoretical discussion of our approximation quality and present extensive empirical results on synthetic and real-world graphs demonstrating the scalability of combining DIRECT with our non-pairwise formulation.},
	language = {en},
	author = {Varakantham, Pradeep and Mostafa, Hala and Fu, Na and Lau, Hoong Chuin},
	file = {Varakantham et al. - DIRECT A scalable approach for route guidance in .pdf:/Users/malin/Zotero/storage/AQUDU2XQ/Varakantham et al. - DIRECT A scalable approach for route guidance in .pdf:application/pdf},
}

@misc{alvarez-miranda_competing_2024,
	title = {Competing for the most profitable tour: {The} orienteering interdiction game},
	shorttitle = {Competing for the most profitable tour},
	url = {http://arxiv.org/abs/2407.02959},
	doi = {10.48550/arXiv.2407.02959},
	abstract = {The orienteering problem is a well-studied and fundamental problem in transportation science. In the problem, we are given a graph with prizes on the nodes and lengths on the edges, together with a budget on the overall tour length. The goal is to find a tour that respects the length budget and maximizes the collected prizes. In this work, we introduce the orienteering interdiction game, in which a competitor (the leader) tries to minimize the total prize that the follower can collect within a feasible tour. To this end, the leader interdicts some of the nodes so that the follower cannot collect their prizes. The resulting interdiction game is formulated as a bilevel optimization problem, and a single-level reformulation is obtained based on interdiction cuts. A branch-and-cut algorithm with several enhancements, including the use of a solution pool, a cut pool and a heuristic method for the follower’s problem, is proposed. In addition to this exact approach, a genetic algorithm is developed to obtain high-quality solutions in a short computing time. In a computational study based on instances from the literature for the orienteering problem, the usefulness of the proposed algorithmic components is assessed, and the branch-and-cut and genetic algorithms are compared in terms of solution time and quality.},
	language = {en},
	urldate = {2025-04-25},
	publisher = {arXiv},
	author = {Álvarez-Miranda, Eduardo and Sinnl, Markus and Tanınmış, Kübra},
	month = jul,
	year = {2024},
	note = {arXiv:2407.02959 [math]},
	keywords = {Mathematics - Optimization and Control, Computer Science - Discrete Mathematics},
	file = {Álvarez-Miranda et al. - 2024 - Competing for the most profitable tour The orient.pdf:/Users/malin/Zotero/storage/IL6AXIQG/Álvarez-Miranda et al. - 2024 - Competing for the most profitable tour The orient.pdf:application/pdf},
}

@misc{parisotto_stabilizing_2019,
	title = {Stabilizing {Transformers} for {Reinforcement} {Learning}},
	url = {http://arxiv.org/abs/1910.06764},
	abstract = {Owing to their ability to both effectively integrate information over long time horizons and scale to massive amounts of data, self-attention architectures have recently shown breakthrough success in natural language processing (NLP), achieving state-of-the-art results in domains such as language modeling and machine translation. Harnessing the transformer’s ability to process long time horizons of information could provide a similar performance boost in partially observable reinforcement learning (RL) domains, but the large-scale transformers used in NLP have yet to be successfully applied to the RL setting. In this work we demonstrate that the standard transformer architecture is difﬁcult to optimize, which was previously observed in the supervised learning setting but becomes especially pronounced with RL objectives. We propose architectural modiﬁcations that substantially improve the stability and learning speed of the original Transformer and XL variant. The proposed architecture, the Gated Transformer-XL (GTrXL), surpasses LSTMs on challenging memory environments and achieves state-of-the-art results on the multi-task DMLab-30 benchmark suite, exceeding the performance of an external memory architecture. We show that the GTrXL, trained using the same losses, has stability and performance that consistently matches or exceeds a competitive LSTM baseline, including on more reactive tasks where memory is less critical. GTrXL offers an easy-to-train, simple-to-implement but substantially more expressive architectural alternative to the standard multi-layer LSTM ubiquitously used for RL agents in partially observable environments.},
	language = {en},
	urldate = {2023-01-07},
	publisher = {arXiv},
	author = {Parisotto, Emilio and Song, H. Francis and Rae, Jack W. and Pascanu, Razvan and Gulcehre, Caglar and Jayakumar, Siddhant M. and Jaderberg, Max and Kaufman, Raphael Lopez and Clark, Aidan and Noury, Seb and Botvinick, Matthew M. and Heess, Nicolas and Hadsell, Raia},
	month = oct,
	year = {2019},
	note = {arXiv:1910.06764 [cs, stat]},
	keywords = {Computer Science - Machine Learning, Statistics - Machine Learning, Computer Science - Artificial Intelligence},
	file = {Parisotto et al. - 2019 - Stabilizing Transformers for Reinforcement Learnin.pdf:/Users/malin/Zotero/storage/TH2DWBFH/Parisotto et al. - 2019 - Stabilizing Transformers for Reinforcement Learnin.pdf:application/pdf},
}

\begin{figure}[t]
\centering
\begin{subfigure}[t]{0.48\columnwidth}
  \includegraphics[width=\linewidth,trim={0cm 6cm 1cm 1cm},clip]{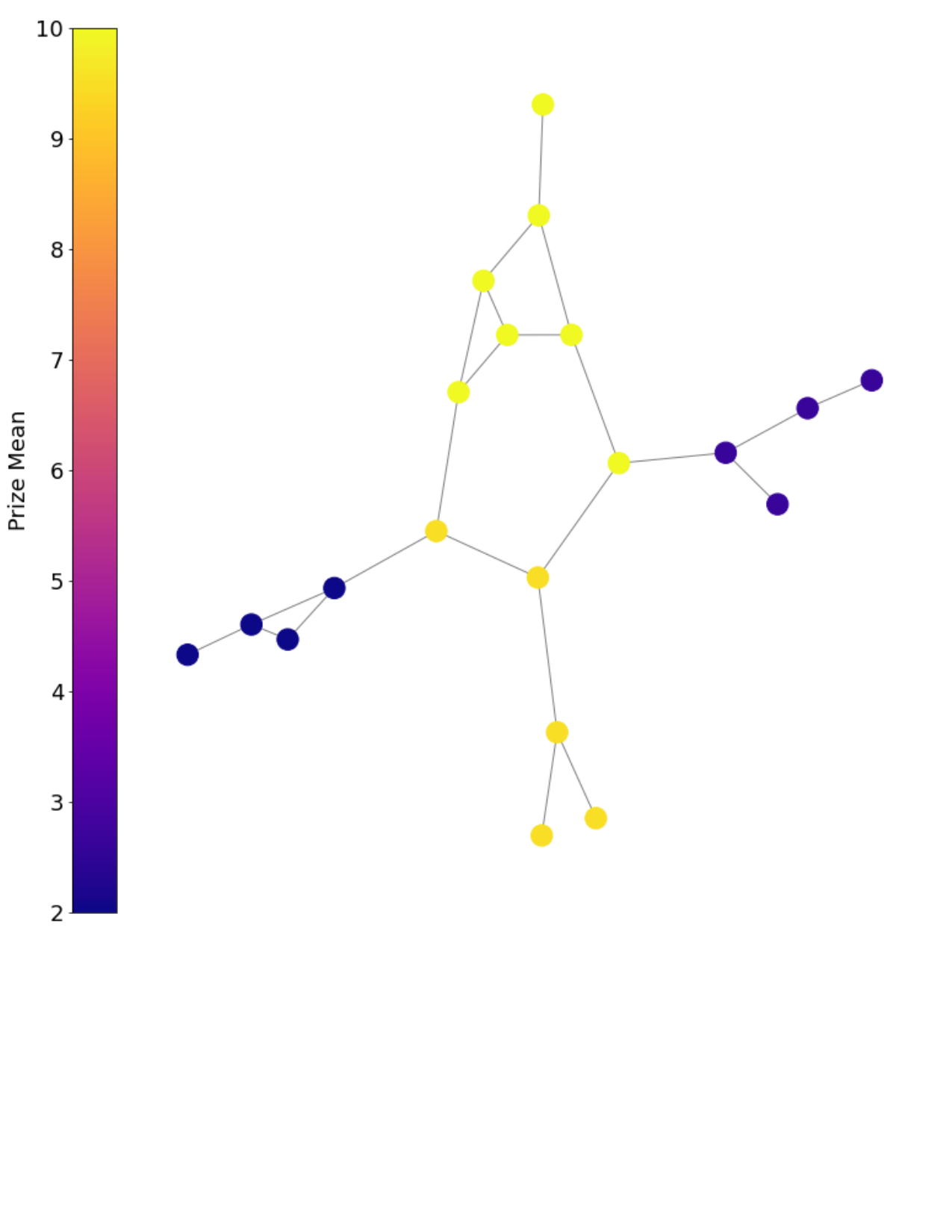}
  \caption{Stockholm}
\end{subfigure}
\hfill
\begin{subfigure}[t]{0.48\columnwidth}
  \includegraphics[width=\linewidth,trim={1cm 6cm 1cm 0cm},clip]{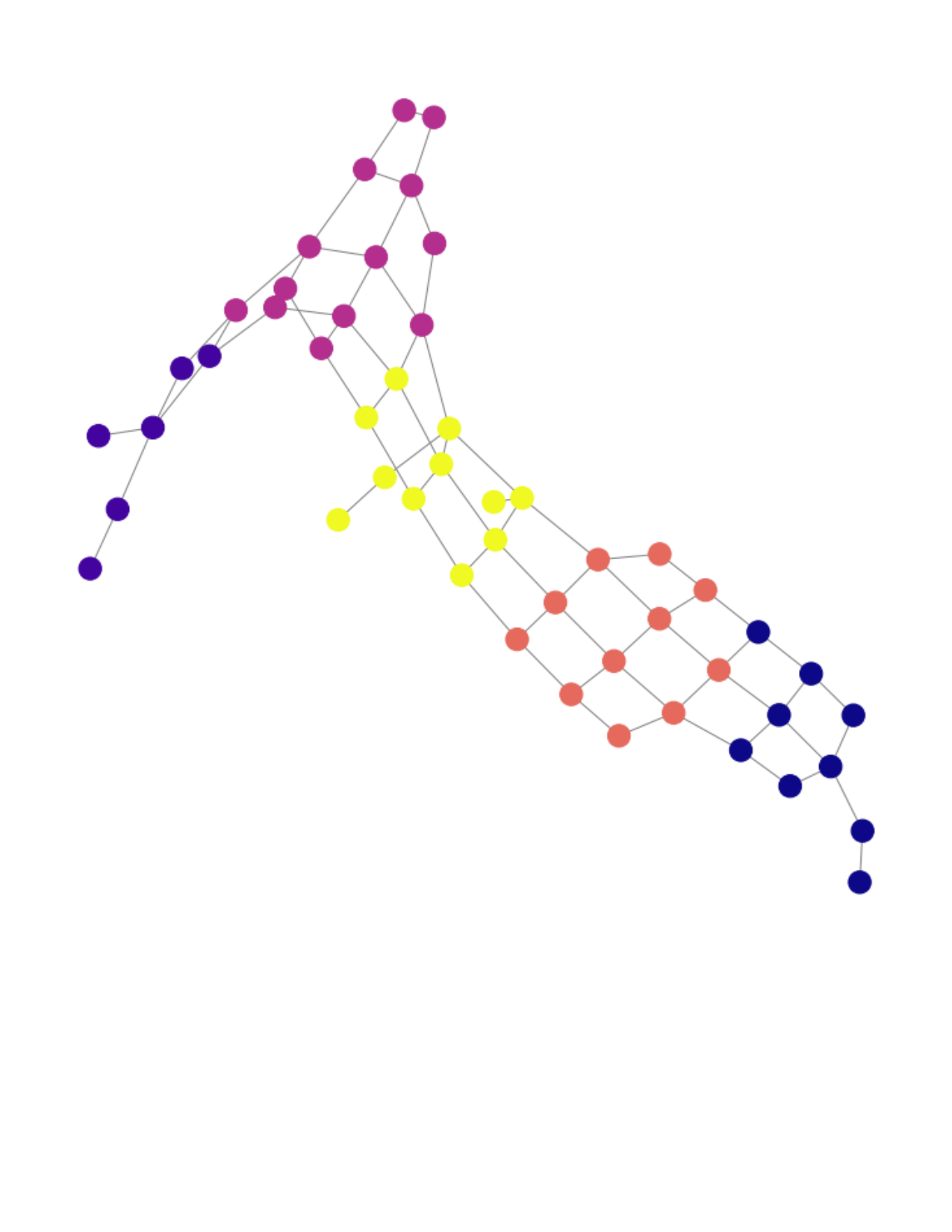}
  \caption{Manhattan}
\end{subfigure}
\caption{Two different road graphs, Stockholm (left) and Manhattan (right) used for the training. Node color encodes mean prize $\bar{p}_u \propto 1/\norm{\mathrm{pos}(u) - \mathrm{pos}(\bar{u})}$, where $\bar{u}$ is the map center.}
\label{fig:cities}
\end{figure}

\end{document}